\newcommand{\Rmnum}[1]{\expandafter\@slowromancap\romannumeral #1@}
\newtheorem{theorem}{Theorem}
\newtheorem{lemma}{Lemma}
\DeclareMathOperator*{\minimize}{minimize}
\newcommand{\undertriangleleft}{\underline{\triangleleft}}
\newtheorem{assumption}{Assumption}
\providecommand{\propositionname}{Proposition}
\begin{document}

\title{Sheaf-Based Decentralized Multimodal Learning for Next-Generation Wireless Communication Systems}
\author{Abdulmomen~Ghalkha, Zhuojun~Tian, Chaouki~Ben~Issaid, and Mehdi~Bennis
\thanks{
A.~Ghalkha, Z.~Tian, C.~B.~Issaid, and M.~Bennis are with the Center for Wireless Communications, University of Oulu, Oulu 90014, Finland. Email: \{abdulmomen.ghalkha, zhuojun.tian, chaouki.benissaid, mehdi.bennis\}@oulu.fi.
}}

\maketitle

\begin{abstract}
In large-scale communication systems, increasingly complex scenarios require more intelligent collaboration among edge devices collecting various multimodal sensory data to achieve a more comprehensive understanding of the environment and improve decision-making accuracy. However, conventional federated learning (FL) algorithms typically consider unimodal datasets, require identical model architectures, and fail to leverage the rich information embedded in multimodal data, limiting their applicability to real-world scenarios with diverse modalities and varying client capabilities. To address this issue, we propose Sheaf-DMFL, a novel decentralized multimodal learning framework leveraging sheaf theory to enhance collaboration among devices with diverse modalities. Specifically, each client has a set of local feature encoders for its different modalities, whose outputs are concatenated before passing through a task-specific layer. While encoders for the same modality are trained collaboratively across clients, we capture the intrinsic correlations among clients' task-specific layers using a sheaf-based structure. To further enhance learning capability, we propose an enhanced algorithm named Sheaf-DMFL-Att, which tailors the attention mechanism within each client to capture correlations among different modalities. A rigorous convergence analysis of Sheaf-DMFL-Att is provided, establishing its theoretical guarantees. Extensive simulations are conducted on real-world link blockage prediction and mmWave beamforming scenarios, demonstrate the superiority of the proposed algorithms in such heterogeneous wireless communication systems.
\end{abstract}
\begin{IEEEkeywords}
Distributed optimization, decentralized learning, multimodal federated learning,  next-generation wireless communication.
\end{IEEEkeywords}


\section{Introduction}
Machine learning (ML) has emerged as a key enabler for next-generation wireless communication systems \cite{saad2019vision}, tackling core challenges such as blockage prediction \cite{wu2023proactively, nishio2019proactive}, millimeter wave (mmWave) beamforming \cite{li2020millimeter, chen2020hybrid, charan2022towards}, and channel estimation \cite{yang2020deep}. While these methods offer promising performance, their success heavily depends on access to large, high-quality datasets. In large-scale wireless networks, however, data is often distributed across numerous devices, each with limited and non-identically distributed samples. Federated learning (FL) has been introduced as a promising alternative, enabling multiple clients to collaboratively train ML models without sharing their raw data \cite{ghalkha2024din, ghalkha2024scalable}. By preserving data privacy and reducing the need for centralized data aggregation, FL offers an attractive solution for collaborative intelligence in wireless communication problems \cite{dai2022distributed, niknam2020federated, ding2023distributed}.

\begin{figure}[t]
\centering
\includegraphics[width=1.0\linewidth]{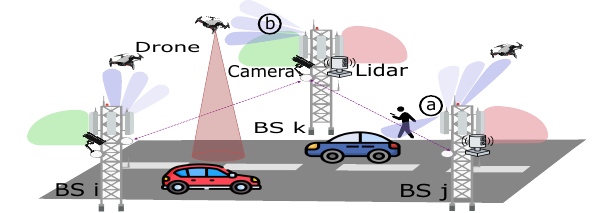}
\caption{Illustration of multimodal use cases in large-scale wireless communication systems: (a) Multimodal-aided blockage prediction, (b) Multimodal beam prediction.}
\label{fig:system_model}
\vspace{-0.5cm}
\end{figure}

Despite notable advancements in FL, the majority of existing research is constrained to unimodal learning settings, where all clients operate on the same data modality, while employing identical model architectures \cite{song2024tackling}. However, this assumption fails to reflect the complexity and heterogeneity of real-world wireless communication systems, where clients observe data through diverse modalities, including visual inputs, acoustic data, or RF signals.
Figure~\ref{fig:system_model} provides a concrete motivating example illustrating two typical multimodal wireless communication scenarios. In scenario (a), a multimodal-aided blockage prediction setting is shown, where various sensing inputs are used to predict potential link disruptions/blockages. Scenario (b) depicts a beam prediction task, where the device utilizes multimodal data such as LiDAR and images to predict the optimal beam direction toward a moving drone. 
The benefits of multimodal learning in communication systems have also been demonstrated in prior works \cite{mollah2025multi, wang2024wireless, yang2020deep}. Specifically, results in \cite{yang2020deep} validate that combining received signal strength, user location, and uplink channel state information significantly improves downlink channel prediction accuracy in massive MIMO systems. 
These findings underscore the need for distributed learning frameworks that can effectively harness multimodal information. Motivated by this, we aim to design a distributed multimodal learning framework that leverages diverse modalities in real-world communication settings.

Several recent works have explored the challenges and opportunities of multimodal federated learning, particularly in the presence of modal heterogeneity among clients. For example, \cite{yu2023multimodal} introduced contrastive objectives to align representations across and within modalities using a public dataset, while \cite{ouyang2023harmony} proposed Harmony, which disentangles the training into a two-stage process consisting of modality-wise federated learning followed by federated fusion. The authors in \cite{chen2022towards} proposed simultaneously optimizing both the blending of modalities and the aggregation weights of local models by adaptively measuring their overfitting and generalization behaviors. Furthermore, \cite{chen2022fedmsplit} introduced a dynamic multi-view graph to model and exploit relationships among clients for improved multimodal learning, specifically by adaptively capturing correlations among multimodal models to address modality discrepancy. Although these methods validate the benefits of incorporating multiple modalities, they typically rely on a central parameter server (PS) to coordinate training and often assume access to public datasets for cross-modal alignment. This centralized approach introduces a single point of failure, limits scalability, and may be infeasible in distributed environments like Internet of Things (IoT) networks, where devices can only communicate via peer-to-peer links.

To address these limitations, \cite{yu2023multimodal} proposed DMML-KD, a fully decentralized multimodal FL framework, which uses knowledge distillation to facilitate knowledge sharing across heterogeneous modalities. While this approach mitigates some of the coordination and communication challenges, it suffers from instability in encoder alignment, as clients sharing the same modality may still learn divergent encoders due to the local knowledge distillation process and varying modality contexts. Consequently, there remains a pressing need for a fully decentralized multimodal learning framework that can (i) accommodate diverse client architectures and data modalities, and (ii) facilitate efficient information sharing and collaboration without reliance on a PS. Recent work by Ben Issaid et al. \cite{ben2024tackling} has explored the application of sheaf theory in decentralized multi-task learning, demonstrating its effectiveness in modeling feature and sample heterogeneity across clients. Sheaf structures \cite{robinson2014topological}, originating from algebraic topology, offer a mathematically principled framework for rigorously integrating local data or models across a network. However, this work does not consider the unique challenges posed by multimodal FL, where clients may have access to different combinations of data modalities.

In this work, we extend the sheaf-theoretic approach to the decentralized multimodal federated learning (DMFL) and demonstrate its applicability across diverse 6G scenarios. Built upon the decentralized multi-task learning paradigm introduced in \cite{ben2024tackling}, we propose Sheaf-DMFL, treating each unique modality combination as a separate task, and the sheaf structure is leveraged to enable nuanced collaboration across clients with heterogeneous sensors by explicitly modeling and learning \textit{consistency relationships} between these differing tasks.
Furthermore, by incorporating a local attention mechanism in the embeddings' fusion, Sheaf-DMFL-Att is proposed as an enhanced variant, which improves multimodal representation learning and shows better predictive performance in complex scenarios. Our contributions can be summarized as follows:
\begin{itemize}
\item We propose a partially-shared model architecture where clients collaboratively train common encoders for shared modalities. The modality-specific features are then fused and passed through a client-specific \emph{head} layer, framing the problem as federated multi-task learning (FMTL) where inter-task relationships are learned using the sheaf theoretic principles.
\item We propose Sheaf-DMFL, incorporating a sheaf Laplacian to capture and learn the inter-task relationships across clients with varying modality availability, which is jointly optimized during training, facilitating adaptive and efficient collaboration among clients with diverse modalities in a principled and scalable manner.
\item Furthermore, we introduce the local attention mechanism for multimodal fusion, named Sheaf-DMFL-Att, which demonstrates improved performance in complex scenarios. We also provide a detailed convergence analysis for Sheaf-DMFL-Att.
\item The algorithms are simulated in two real-world multi-modal 6G communication tasks, including a multi-base-station blockage prediction and mmWave beamforming prediction tasks. Experimental results demonstrate that both algorithms achieve superior learning performance and fast convergence in the presence of heterogeneous modalities and decentralized connectivity. 
\end{itemize}

The rest of this paper is organized as follows. Section \ref{formulation} presents the system model and problem formulation. Section \ref{proposed_algorithm} introduces the proposed algorithms. Section \ref{convergence_analysis} provides the theoretical convergence analysis. Section \ref{Simulation_section} evaluates the proposed methods through real-world 6G use cases, including blockage prediction and mmWave beamforming. Finally, Section \ref{sec_conclusion} concludes the paper and outlines future directions. Note that this article significantly extends our previous work \cite{Ghalkha2025SheafDMFL} in several ways. 
Firstly, we extend the concatenation fusion to the attention-based fusion, enhancing the performance in more complex scenarios. Secondly, the theoretical analysis of the convergence property is rigorously derived. Finally, we conduct additional experiments on an additional 6G use case.

\section{System Model and Problem Formulation} 
\label{formulation}
\subsection{Decentralized Multimodal Multi-task Learning}
We consider a decentralized communication network denoted by $\mathcal{G} = (\mathcal{V}, \mathcal{E})$, where $\mathcal{V} = [N] = \{1, \ldots, N\}$ represents the set of clients and $\mathcal{E}\subseteq\mathcal{V}\times\mathcal{V}$ denotes the edges indicating direct communication links between clients. The set of global modalities is defined as $\mathcal{M}=\{1, \dots, M\}$, where $M$ is the total number of distinct modalities in the system. 
For an arbitrary client $i$, its local training dataset is denoted by $\mathcal{D}_i=\{\bm{x}_{i}, \bm{y}_{i}\}$, with $\bm{x}_{i}=\{\bm{x}_{i,k}: \forall k\in\mathcal{M}_i\}$. Here, $\mathcal{M}_i$ represents the locally available modalities of node $i$, where $ \mathcal{M}_i\subseteq \mathcal{M}$ and $|\mathcal{M}_i| \geq 1$. 
The local model parameters of node $i$ are denoted by $\bm{\theta}_i \in \mathbb{R}^{d_i}$, where $d_i$ is the number of parameters. 
This model, denoted by $\chi_{i}: \prod_{k \in \mathcal{M}_i} \mathbb{R}^{m_k} \to \mathbb{R}^C$, takes the raw data inputs from the available modalities and produces a prediction in the label space, where $m_k$ represents the input dimension of modality $k$ and $C$ is the dimension of the output space.

In the FL setup, multiple clients train models collaboratively while preserving data privacy by not sharing local data. Depending on the modalities available to a client, a task is defined by a specific combination of input modalities, and clients solving the same task must use the same set of modalities. However, clients solving different tasks may still share common modalities, creating related tasks. Traditional FL focuses on identical tasks and often ignores clients working on different tasks, even when they share modalities. Yet, leveraging these shared modalities can enhance learning. This modality-based cross-task relationship is captured by a task similarity matrix, $\bm{\Omega}$, which enables clients solving related tasks to transfer knowledge, enhancing learning efficiency beyond classic FL approaches \cite{smith2017federated}. A standard approach to encourage collaboration in multi-task learning is to introduce a regularization term that promotes similarity between the models of related tasks \cite{smith2017federated}. Let $\bm{\theta} = [\bm{\theta}_1^T, \ldots, \bm{\theta}_N^T]^T \in \mathbb{R}^d$ denote the stacked parameters for the entire system, where $d = \sum_{i=1}^N d_i$ is the sum of the number of model parameters of all clients. Then, the optimization problem of multi-task learning can be written as \cite{smith2017federated}
\begin{align}
\label{eq:MTL_problem}
\minimize_{\bm{\theta}, \bm{\Omega}} \sum_{i=1}^{N} f_i(\bm{\theta}_i) + \mathcal{R}(\bm{\Omega}, \bm{\theta}), \tag{P1}
\end{align}
where $f_i(\bm{\theta}_i) = \sum_{j=1}^{N_i}\ell(\hat{y}_{i, j}, y_{i,j};\bm{\theta}_i)$ is the loss function of the $i$th client on the $j$th data sample, depending on the model parameters $\bm{\theta}_i$. Here, $\hat{y}_{i, j}$ is the predicted label for the $j$th data sample, and $\ell$ denotes the loss between the predicted label and the ground truth $y_{i,j}$. The regularizer $\mathcal{R}(\bm{\Omega}, \bm{\theta})$ controls the relationships between tasks, encoded in the matrix $\bm{\Omega}$. 

This formulation enables distributed optimization, where each client uses local data to solve its task while considering task dependencies encoded in $\bm{\Omega}$. 

The authors in \cite{smith2017federated} propose solving (\ref{eq:MTL_problem}) by alternating between optimizing the model parameters $\bm{\theta}$ and the task relationships $\bm{\Omega}$. While this approach simplifies the optimization process, the update of $\bm{\Omega}$ requires a central processor, which introduces significant communication bottlenecks and may not be feasible in IoT networks.
To address this issue, we leverage the sheaf structure to model and learn such task dependencies.

\subsection{Sheaf Structure for Multi-task Learning}
The sheaf structure, as defined in \cite{ben2024tackling}, enables modeling interactions between tasks and finding pairwise model relations. We begin by establishing the mathematical foundation for defining sheaves over communication graphs and then proceed to propose our method.
Intuitively, a sheaf structure can be seen as a mathematical framework connecting local views (represented by vector spaces) over the network.
In distributed learning, each client has its own perspective on the learning problem based on its available modalities. The sheaf structure coherently integrates these perspectives while preserving the underlying network topology. Specifically, the sheaf structure formalizes how these views relate to each other through restriction maps, serving as ``translation functions'' between different clients' perspectives.

\begin{figure}[bt]
    \centering
    \includegraphics[width=0.5\textwidth]{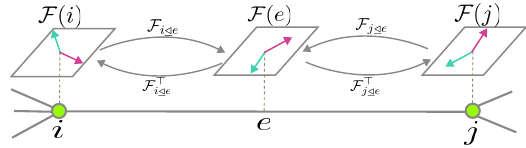}
    \caption{Illustration of a cellular sheaf over a communication graph. Vector spaces are assigned to nodes and edges, with linear restriction maps encoding the relationships between the vector spaces of adjacent nodes.}
    \label{fig:sheaf_diagram}
    \vspace{-0.5cm}
\end{figure}

A \textit{cellular sheaf}, as depicted in Figure~\ref{fig:sheaf_diagram}, equips the graph with extra structure by assigning vector spaces and linear transformations to its vertices and edges. Specifically, a ``cellular sheaf'' $\mathcal{F}$ of $\mathbb{R}$-vector spaces over a simple graph $\mathcal{G} = (\mathcal{V}, \mathcal{E})$ includes
\begin{itemize}
  \item For each $i \in \mathcal{V}$, a vector space $\mathcal{F}(i) = \mathbb{R}^{d_i}$.
  \item For each edge $e = (i, j) \in \mathcal{E}$, a vector space $\mathcal{F}(e) = \mathbb{R}^{d_{ij}}$.
  \item For each edge $e \in \mathcal{E}$ and incident vertex $i \in \mathcal{V}$, a linear transformation $\mathcal{F}_{i \undertriangleleft e} : \mathcal{F}(i) \to \mathcal{F}(e)$.
\end{itemize}
We denote $\mathcal{F}(i)$ and $\mathcal{F}(e)$ as stalks over $i$ and $e$, respectively, with $\mathcal{F}_{i \undertriangleleft e}$ being the restriction map from $i$ to $e$. The matrix representation of $\mathcal{F}_{i \undertriangleleft e}$, denoted $\bm{P}_{ij}$, is used interchangeably with $\mathcal{F}_{i \undertriangleleft e}$, while the dual maps $\mathcal{F}^*_{i \undertriangleleft e}$ correspond to the transposes $\bm{P}_{ij}^T$. For each client $i \in \mathcal{V}$, $\mathcal{F}(i)$ represents the space for their local models, $\bm{\theta}_i \in \mathcal{F}(i)$. The set of local models ${\bm{\theta}_i},{i \in \mathcal{V}}$ belongs to the total space $C^0(\mathcal{F}) := \bigoplus_{i \in \mathcal{V}} \mathcal{F}(i)$. Due to feature heterogeneity or different tasks, clients may have different model sizes, and a single client cannot fully observe elements of $C^0(\mathcal{F})$. 

Models can be compared via restriction maps if they share an edge. For $e = (i, j) \in \mathcal{E}$, $\mathcal{F}(e)$ allows comparison of models $\bm{\theta}_i$ and $\bm{\theta}_j$ using projections $\mathcal{F}_{i \undertriangleleft e}(\bm{\theta}_i)$ and $\mathcal{F}_{j \undertriangleleft e}(\bm{\theta}_j)$. The overall comparison is done in the total space $C^1(\mathcal{F}) := \bigoplus_{e \in \mathcal{E}} \mathcal{F}(e)$, using the sheaf Laplacian $L_{\mathcal{F}}$, a block matrix whose $(j, i)$-th block, denoted by $L_{j i}$, is defined as
\begin{align}
    L_{ji} &= \begin{cases} 
    \sum_{i \undertriangleleft e} \mathcal{F}^*_{i \undertriangleleft e} \circ \mathcal{F}_{i \undertriangleleft e}, & \text{if } i = j, \\
    -\mathcal{F}^*_{j \undertriangleleft e} \circ \mathcal{F}_{i \undertriangleleft e}, & \text{if } e = (i, j) \in \mathcal{E}, \\
    \bm{0}, & \text{otherwise}.
    \end{cases}
\end{align}
The sheaf Laplacian $ \bm{\theta}^T L_{\mathcal{F}} \bm{\theta} $ quantifies how much $ \bm{\theta} $ deviates from the global consensus, i.e., when $ L_{\mathcal{F}} \bm{\theta} = 0 $. This can be expressed as 
\begin{align}
\label{eq_quadratic_sheaf_laplacian}
     \bm{\theta}^T L_{\mathcal{F}} \bm{\theta}  = \sum_{e = (i,j) \in \mathcal{E}} \| \mathcal{F}_{i \undertriangleleft e}(\bm{\theta}_i) - \mathcal{F}_{j \undertriangleleft e}(\bm{\theta}_j) \|^2. 
\end{align}
Under this mathematical foundation of the sheaf structure, problem \eqref{eq:MTL_problem} can be rewritten as
\begin{align}
\label{eq_FMTL_problem2}
\minimize_{\{\bm{\theta}_i\}_{i\in\mathcal{V}}, {L_\mathcal{F}}} \sum_{i = 1}^{N} f_i(\bm{\theta}_i) + \frac{\lambda}{2}\bm{\theta}^\top L_{\mathcal{F}}\bm{\theta},
\end{align}
where $\lambda$ is a regularization hyperparameter that controls the strength of the sheaf Laplacian regularization.

\section{Proposed Learning Framework}\label{proposed_algorithm}
\begin{figure*}[!htp]
    \centering
    \begin{subfigure}[t]{0.30\textwidth}
        \centering
        \includegraphics[width=\linewidth]{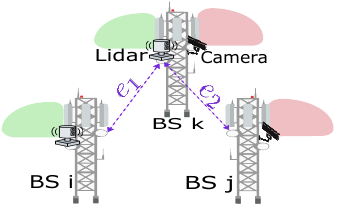}
        \caption{}
        \label{fig:subfig1}
    \end{subfigure}
    \begin{subfigure}[t]{0.68\textwidth}
        \centering
        \includegraphics[width=\linewidth]{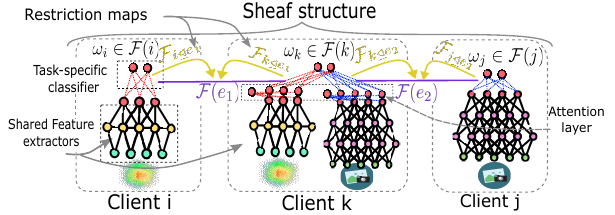} 
        \caption{}
        \label{fig:architecture}
    \end{subfigure}
    \caption{An illustration of three multi-modal BSs and their model architectures: The proposed sheaf-enabled decentralized multimodal federated learning architecture with three clients and two modalities. Each client employs modality-specific feature extractors and attention-based fusion, followed by a task-specific classifier. A sheaf structure is defined over the task-specific layers to model inter-client (task) relations.}
    \label{fig:system_model_architecture_combined}
    \vspace{-0.5cm}
\end{figure*}

To solve problem \eqref{eq_FMTL_problem2}, a full set of $L_{ji}$ across all edges needs to be learned, leading to significant memory and computation overhead, with space complexity scaling as $\mathcal{O}(|\mathcal{E}|d^2)$. This issue is exacerbated in multimodal settings, where multiple feature extractors lead to large models, making the formulation impractical for most machine learning architectures. To address this issue, we first introduce a partially-shared model architecture, where the sheaf Laplacian is only applied for the task-specific layer, retaining expressiveness while improving scalability.
Based on that, we further design the updating rules.

\subsection{Partially-shared Model Architectures}
As shown in \cite{collins2021exploiting, tian2023distributed}, local deep models can be split into globally shared layers and client-specific heads.
This idea is rooted in traditional ML, where heterogeneous data can share a common representation despite differing tasks. 
Inspired by this, we propose using a partially shared neural network architecture as the local model in each node. 
Specifically, as shown in Figure \ref{fig:architecture}, the initial layers, mapping the raw data input to low-dimensional representations, are shared among all clients with the same modality, while the last layer is task-specific, fusing embeddings from different modalities. In this approach, each client utilizes a set of local feature extractors, one for each modality, denoted as $\psi_{i, k}: \mathbb{R}^{m_k} \rightarrow \mathbb{R}^{l_k}$ for each modality $k \in \mathcal{M}_i$ parameterized by $\bm{\phi}_{i, k}$, where $l_k$ is the dimension of the extracted feature embedding. To form a unified representation, the extracted embeddings are concatenated into a single vector

While simple concatenation is a viable fusion strategy, it treats all modalities equally. To allow the model to dynamically prioritize more informative modalities for a given task, we propose an \textit{attention-based fusion mechanism} to compute a weighted combination of modality-specific embeddings, allowing the model to learn the interaction between modalities. This design exploits correlations between different modalities, letting the model focus on the most relevant features from each, improving robustness and overall performance. The attention weights $\alpha_{i,k}$ are computed using a learnable scoring function. One common choice is the multi-layer perceptron (MLP) attention mechanism, which computes $e_{i,k} = \tanh\left( \bm{\beta}_{i, k}^\top \bm{h}_{i,k} \right)$, where $\bm{\beta} \in \mathbb{R}^{l_k \times 1}$ is a learnable parameter \cite{vaswani2017attention}. The scores $e_{i,k}$ are then normalized using a softmax operation to get $\alpha_{i,k} = \exp(e_{i,k})/\sum_{j \in \mathcal{M}_i} \exp(e_{i,j})$.

The mapping from raw data to the concatenated/combined feature extractors can be written as a function $\psi_i$ that processes the input modalities available to the client $i$. Each data sample corresponding to modality $k \in \mathcal{M}_i$ is first encoded using a modality-specific feature extractor $\psi_k$, producing an intermediate representation $\psi_k(\bm{x}_{i,k})$. 
These modality-specific embeddings are then concatenated or fused using the attention layer to form the final feature representation for the clients, expressed as 
\begin{itemize}
    \item{Concatenation: 
    \begin{align}
    \label{eq_concatinate}
    \psi_i(\bm{x}_i) = \text{Concatenate} \left( \{ \psi_k(\bm{x}_{i,k}) \}_{k \in \mathcal{M}_i} \right),
    \end{align}}
    \item{Attention fusion:
        {
    \begin{align}
    \label{eq_concatinate_attention}
    \psi_i(\bm{x}_i) =  \sum_{k \in \mathcal{M}_i}\alpha_{i,k} \psi_k(\bm{x}_{i,k}),
    \end{align}
    }
    }
\end{itemize}
In \eqref{eq_concatinate}, $\psi_i$ is parameterized by $\bm{\Phi}_i = \{\bm{\phi}_{i, k}\}_{k \in \mathcal{M}_i}$, while in \eqref{eq_concatinate_attention} $\psi_i$ is additionally parameterized by the attention parameters $\bm{\beta}_{i} = \{\bm{\beta}_{i, k}\}_{k \in \mathcal{M}_i}$.

Once the fused feature representation is obtained, it is processed by a task-specific layer, modeled as a single-layer NN $g_i(.)$ parameterized by $\bm{\omega}_i \in \mathbb{R}^{d_i^g}$. The output of the task-specific layer is denoted by $\hat{y}_i = g_i(\psi_i(\bm{x}_i); \bm{\omega}_i)$. 
The end-to-end mapping from the raw data to the output can be given by $\chi_i = g_i \circ \psi_i (\{\bm{x}_{i,k} \}_{k \in \mathcal{M}_i})$, where $g_i $ is parameterized by $ \bm{\omega}_i $. 

Thus, for simple concatenation as in \eqref{eq_concatinate}, the overall model parameters for client $i$ is defined as $\bm{\theta}_i = [\bm{\Phi}_i^T, \bm{\omega}_i^T]^T$, consisting of the shared feature extractors and the task-specific layer. For attention fusion \eqref{eq_concatinate_attention}, the the overall model parameters for client $i$ are defined as $\bm{\theta}_i = [\bm{\Phi}_i^\top, \bm{\beta}_i^\top, \bm{\omega}_i^T]^\top$, with additional local attention parameters. Note that attention fusion introduces additional parameters to better handle complex modality interactions, making it more suitable for challenging tasks.

In the $(r + 1)$th iteration and assuming reliable communication, the client $i$ updates its feature extractor parameters $\bm{\phi}_{i,k}$ for each modality $k \in \mathcal{M}_i$ with Stochastic Gradient Descent (SGD) to obtain $\bm{\phi}_{i,k}^{r+\frac{1}{2}}$. For attention fusion, the clients update their local attention parameters $\bm{\beta}_{i,k}$ at the same time.
After applying local SGD, each client aggregates the intermediate parameters of the feature extractors from its neighbors, which share the same modality. To formalize this, we introduce a modality-specific mixing matrix $\bm{W}_k = [w_{ij}^k]$. The modality-specific mixing matrices $\bm{W}_k$ are constructed to ensure effective aggregation of encoder parameters among clients sharing the same modality $k$. For each modality $k$, we first identify the subgraph $\mathcal{G}_k$ consisting of clients with that modality. Then, we construct $\bm{W}_k$ using the Metropolis-Hastings weight assignment. The feature extractor parameters are then updated as
\begin{align}
    \vspace{-0.5cm}
    \label{eq_theta_aggregate}
    \bm{\phi}_{i,k}^{r+1}\!=\!\sum_{j \in \mathcal{N}_i^k} w_{ij}^k \bm{\phi}_{j,k}^{r+\frac{1}{2}},
\end{align}
where $\mathcal{N}_i^r$ denotes the set of neighbors of client $i$ that possess modality $k$. 
Concerning the task-specific parameters $\bm{\omega}_i$, we introduce the sheaf-assisted aggregation procedure.

\subsection{Sheaf-enabled Decentralized Multimodal Federated Learning}
Under the partially shared model architecture, we capture the interaction among tasks by defining the sheaf structure on the task-specific layers. 
The dimensionality of the task-specific layer is much smaller ($d_i^g \ll d$), significantly reducing memory complexity. Figure \ref{fig:architecture} illustrates the architecture of the partially-shared models, where a sheaf structure is defined on task-specific classifiers, with feature extractors shared across clients. This setup enables the projection and comparison of different task-specific parameters in the edge vector space $\mathcal{F}(e)$.
Using the sheaf Laplacian regularization, the FMTL problem (\ref{eq_FMTL_problem2}) is formulated as
\begin{align}
    \label{eq_FMTL_problem}
        \minimize_{\substack{\{\bm{\Phi}_i, \bm{\beta}_i, \bm{\omega}_i\}_{i \in \mathcal{V}}, \\ \{\bm{P}_{ij}\}_{(i, j) \in \mathcal{E}}}} \sum_{i = 1}^{N} f_i(\bm{\Phi}_i, \bm{\beta}_i, \bm{\omega}_i) + \frac{\lambda}{2} \bm{\omega}^T L_{\mathcal{F}}(\bm{P})\bm{\omega},
\end{align}
where $\bm{\omega}$ is the concatenation of $\bm{\omega}_i$. The choice of $\lambda$ determines the degree of coordination between neighboring clients' task-specific parameters. When $\lambda = 0$, clients train independently. As $\lambda$ increases, the regularization penalizes inconsistencies across projected task-specific layers more strongly, thereby encouraging tighter collaboration. The matrix $L_{\mathcal{F}}(\bm{P})$ denotes the sheaf Laplacian, which is constructed using the restriction block matrix $\bm{P}$ with blocks $P_{e, i}$ corresponding to row $e \in \mathcal{E}$ and column $i \in \mathcal{V}$, defined as
\begin{align}
\label{eq_projection_block_matrix}
    \bm{P}_{e, i} = \begin{cases}
        \bm{P}_{ij}, & \text{if} \ e=(i, j),\\
        -\bm{P}_{ij}, & \text{if} \ e=(j, i), \\
        \bm{0}, & \text{otherwise,}
    \end{cases}
\end{align}
where $\bm{P}_{ij} \in \mathbb{R}^{d_{ij} \times d_i}$ is a restriction map projecting the local model $\bm{\theta}_i$ onto a shared edge space with neighbor $j$. 

Using the definition of Sheaf Laplacian in \eqref{eq_quadratic_sheaf_laplacian} and construction of restriction maps in \eqref{eq_projection_block_matrix}, we can write the quadratic term in \eqref{eq_FMTL_problem} as a sum of pairwise disagreements over the network edges
\begin{align}
\label{eq_quadratic_sheaf_laplacian_omega}
\bm{\omega}^T L_{\mathcal{F}}(\bm{P})\bm{\omega} = \sum_{i=1}^{N}\sum_{j \in \mathcal{N}_i} \|\bm{P}_{ij}\bm{\omega}_i\!-\!\bm{P}_{ji}\bm{\omega}_j\|^{2},
\end{align}
where $\bm{P}_{ij} \in \mathbb{R}^{d_{ij}\times d_i^g}$ is the restriction map of the model $\bm{\omega}_i$ to the edge $e=(i, j)$. 
Conceptually, the restriction maps $\bm{P}_{ij}$ and $\bm{P}_{ji}$ project the parameter vectors $\bm{\omega}_i$ and $\bm{\omega}_j$ into a common, lower-dimensional ``comparison space'' defined over the edge  $(i, j)$. The sheaf Laplacian term then penalizes disagreements within this shared space. By learning the maps 
$\bm{P}$ jointly with the models $\bm{\omega}$, the system adaptively discovers the optimal way to compare and align heterogeneous task models. Using the definition in \eqref{eq_quadratic_sheaf_laplacian_omega}, we can rewrite the FMTL problem in \eqref{eq_FMTL_problem} as
\begin{align}
   \label{eq_FMTL_problem3}
    \minimize_{\substack{\{\bm{\Phi}_i, \bm{\beta}_i, \bm{\omega}_i\}_{i \in \mathcal{V}}, \\ \{\bm{P}_{ij}\}_{(i, j) \in \mathcal{E}}}} \sum_{i = 1}^{N} f_i(\bm{\Phi}_i, \bm{\beta}_i, \bm{\omega}_i)\!+\!\frac{\lambda}{2} \sum_{i=1}^{N}\sum_{j \in \mathcal{N}_i} \|\bm{P}_{ij}\bm{\omega}_i\!-\!\bm{P}_{ji}\bm{\omega}_j\|^{2}.
\end{align}
To solve (\ref{eq_FMTL_problem3}), each client $i$ jointly learns $\bm{\theta}_i$ and $\bm{P}_{ij}, \forall j \in \mathcal{N}_i$ by minimizing its local loss while keeping track of the common features of the neighboring clients by minimizing the second term. 
Specifically, in one iteration, each client $i$ exchanges $\bm{P}_{ij}^r \bm{\omega}_i^r$ and $\bm{P}_{ji}^r \bm{\omega}_j^r$ with all of its neighbors $j \in \mathcal{N}_i$, to update the task-specific parameters $\bm{\omega}_i$. The update rule for the task-specific model parameters $\bm{\omega}_i$ is as follows \cite{ben2024tackling}
\begin{align}
    \label{eq:task_specific_update}
    \bm{\omega}_{i}^{r+1}\!=\!\bm{\omega}_{i}^{r}\!-\!\alpha 
    ( 
    \nabla_{\bm{\omega}_i} f_i(\bm{\theta}_i^r)\!+\!\lambda 
    \sum_{j \in \mathcal{N}_i} 
    (\bm{P}_{ij}^{r})^T \left( \bm{P}_{ij}^{r} \bm{\omega}_{i}^{r}\!-\!\bm{P}_{ji}^{r} \bm{\omega}_{j}^{r} )
    \right),
\end{align}
where $\alpha$ is the learning rate and $\nabla f_i(\bm{\omega}_{i}^{r})$ is the gradient of the local loss function of client $i$ at iteration $k$. After updating $\bm{\omega}_i$, client $i$ computes $\bm{P}_{ij}^k \bm{\omega}_i^{r+1}$, then sends it and receives $\{\bm{P}_{ji}^k \bm{\omega}_j^{r+1}\}_{j \in \mathcal{N}_i}$ to update the projection maps $\bm{P}_{ij}$ with the following rule
\begin{align}
    \label{eq:projection_update}
    \bm{P}_{ij}^{r+1} &= \bm{P}_{ij}^{r} - \eta \lambda 
    \left( \bm{P}_{ij}^{r} \bm{\omega}_{i}^{r+1} - \bm{P}_{ji}^{r} \bm{\omega}_{j}^{r+1} \right)
    \left( \bm{\omega}_{i}^{r+1} \right)^T,
\end{align}
where $\eta$ is the learning rate for $\bm{P}$. The Sheaf-enabled Decentralized Multimodal Federated Learning (Sheaf-DMFL) and its attention-enhanced counterpart Sheaf-DMFL-Att can be summarized in Algorithm \ref{alg:fmtl_sheaf}.

\begin{algorithm}[t]
\caption{Sheaf-enabled Decentralized Multimodal Federated Learning}
\label{alg:fmtl_sheaf}
\begin{algorithmic}[1]
\STATE \textbf{Input:} Initial parameters $\bm{\Phi}_i$, $\bm{\omega}_i$, $\bm{P}_{ij}$, learning rates $\eta_{\phi}$, $\alpha$, $\eta$, mixing matrices $\bm{W}_k$
\FOR{each iteration $r = 1, 2, \dots$}
    \FOR{each client $i \in \mathcal{V}$ \textbf{in parallel}}
        \FOR{each modality $k \in \mathcal{M}_i$}
            \STATE Compute local feature embeddings $\bm{h}_{i,k} = \psi_{i, k}(\bm{x}_{i,k})$.
        \ENDFOR
        \STATE Concatenate embeddings to form a unified representation using \eqref{eq_concatinate} (Sheaf-DMFL) or \eqref{eq_concatinate_attention} (Sheaf-DMFL-Att).
        \STATE Compute task-specific output $\hat{y}_i = g_i(\hat{\bm{h}}_i; \bm{\omega}_i)$.
        \STATE Update feature extractor parameters $\bm{\Phi}_i$ and attention parameters $\bm{\beta}_i$ with SGD.
        \STATE Aggregate feature extractor parameters from neighbors using \eqref{eq_theta_aggregate}.
        \STATE Update task-specific parameters using \eqref{eq:task_specific_update}.
        \STATE Exchange projections $\bm{P}_{ij}^r \bm{\omega}_i^{r+1}$ with neighbors.
        \STATE Update projection maps using \eqref{eq:projection_update}.
    \ENDFOR
\ENDFOR
\end{algorithmic}
\end{algorithm}
The projection matrix $\mathbf{P}_{i,j}$ can be initialized either randomly or structurally. In the case of random initialization, each element of $\mathbf{P}_{i,j}^{\mathcal{R}}$ is drawn from a normal distribution, $\mathcal{N}(0, \sigma^2)$, with a spreading factor $\sigma^2$. In the case of structured initialization, $\mathbf{P}_{i,j}^{\mathcal{I}}$ is constructed from a subset of columns of the identity matrix. The dimensions of $\mathbf{P}_{i,j}$ are determined by the compression factor $\gamma$. Specifically, $\mathbf{P}_{i,j}$ has dimensions $\left( \left\lfloor \gamma \cdot \frac{\left( d_i^g + d_j^g \right)}{2} \right\rfloor, \, d_i \right)$, where $\gamma$ controls the projection dimension, and $d_i$ and $d_j$ are the number of parameters of classifiers $i$ and $j$, respectively. The compression factor $\gamma$ thus scales the number of rows in $\mathbf{P}_{i,j}$, reflecting a trade-off between model expressiveness and dimensionality reduction.

\section{Theoretical Analysis} \label{convergence_analysis}

In this section, we provide a rigorous convergence analysis for the proposed Sheaf-DMFL-Att algorithm. We establish that our method converges to a stationary point by analyzing the evolution of the feature encoders, attention weights, and the sheaf-regularized task-specific heads. We begin by stating the key assumptions underpinning our analysis.

\begin{assumption} \label{ass_W_double_stochasticity}
    Each modality-specific mixing matrix $ \bm{W}_k $ satisfies the double stochasticity condition: $\sum_{j \in \mathcal{V}_k} w_{ij}^k = 1$, $\sum_{i \in \mathcal{V}_k} w_{ij}^k = 1, \quad \forall i, j \in \mathcal{V}_k.$
\end{assumption}

\begin{assumption}\label{ass_subgraph_connectivity}
    For each modality $k \in \mathcal{M}$, the induced subgraph $\mathcal{G}_k = (\mathcal{V}_k, \mathcal{E}_k)$ is connected, where $\mathcal{V}_k = \{ i \in \mathcal{V} \mid k \in \mathcal{M}_i \}$ and $\mathcal{E}_k = \{ (i, j) \in \mathcal{E} \mid i, j \in \mathcal{V}_k \}$. This implies that the spectral gap of the corresponding mixing matrix $\bm{W}_k$ is strictly positive, i.e., $1 - |\lambda_2(\bm{W}_k)| > 0$, where $\lambda_2(\bm{W}_k)$ is the second-largest eigenvalue of $\bm{W}_k$ in magnitude.
\end{assumption}

\begin{assumption}\label{ass_smoothness}
    Each local loss function $f_i: \mathbb{R}^{d_i} \to \mathbb{R}$ is differentiable and $L$-smooth for some constant $L > 0$. That is, for any $\bm{\theta}_i, \bm{\theta}'_i \in \mathbb{R}^{d_i}$, the gradient $\nabla f_i$ is Lipschitz continuous:
    \begin{align}
         \| \nabla f_i(\bm{\theta}_i) - \nabla f_i(\bm{\theta}'_i) \| \le L \| \bm{\theta}_i - \bm{\theta}'_i \|.
    \end{align}
\end{assumption}

\begin{assumption}\label{ass_bounded_domain}
    The parameter vectors are uniformly bounded. Specifically, there exist positive constants $D_\omega$, $D_{\bm{\phi}_k}$, and $D_{\bm{\beta}_k}$ such that for all clients $i \in \mathcal{V}$ and all modalities $k \in \mathcal{M}_i$:
    \begin{align}
        \| \bm{\omega}_i \| \le D_\omega, \quad \| \bm{\phi}_{i,k} \| \le D_{\bm{\phi}_k}, \quad \text{and} \quad \| \bm{\beta}_{i,k} \| \le D_{\bm{\beta}_k}.
    \end{align}
\end{assumption}

Assumptions \ref{ass_W_double_stochasticity} and \ref{ass_subgraph_connectivity} are standard in decentralized optimization literature, ensuring that the gossip-based averaging of feature encoders is well-behaved and leads to consensus on connected subgraphs. Assumption \ref{ass_smoothness} is a standard regularity condition on the loss functions, which bounds the change in the gradient and is essential for guaranteeing descent. Finally, Assumption \ref{ass_bounded_domain} ensures that the model parameters and their projections remain in a compact set, which is necessary to bound terms related to the update of the restriction maps.
Next, we start by recalling the update of each variable of the optimization problem \ref{eq_FMTL_problem3}. At each iteration $r$, each client $i$ updates its local feature extractor for modality every modality $k \in \mathcal{M}_i$
\begin{align}
\label{eq_encoder_GD_step}
\bm{\phi}_{i,k}^{r+\frac{1}{2}} &= \bm{\phi}_{i,k}^{r} - \eta_{\phi} \nabla_{\bm{\phi}_{i,k}} f_i(\bm{\theta}_i^r),
\end{align}
\begin{align}
\label{eq_encoder_agg_step}
\bm{\phi}_{i,k}^{r+1} = \sum_{j \in \mathcal{N}_i^k} w_{ij}^k \bm{\phi}_{j,k}^{r+\frac{1}{2}},
\end{align}
with the attention, parameters are updated locally as
\begin{align}
    \label{eq_attention_GD_step}    
    \bm{\beta}_{i,k}^{r+1} &= \bm{\beta}_{i,k}^{r} - \eta_{\beta}           \nabla_{\bm{\beta}_{i,k}} f_i(\bm{\theta}_i^r).
    \end{align}
Finally, the task-specific parameters and the restriction maps are updated as in \eqref{eq:task_specific_update} and \eqref{eq:projection_update}.

Additionally, we define $\bm{\Phi}_k^r = [\bm{\phi}_{i,k}^r \text{ for } i \in \mathcal{V}_k] \in \mathbb{R}^{d_k \times |\mathcal{V}_k|}$ to be the matrix concatenating the parameters for modality $k$ from all clients $i \in \mathcal{V}_k$ at iteration $r$.
The update rules in \eqref{eq_encoder_GD_step} and \eqref{eq_encoder_agg_step} can be written in matrix form for each modality $k$ as
\begin{align}
   \bm{\Phi}_k^{r+1} = \left( \bm{\Phi}_k^r - \eta_\phi \bm{\Delta}_k^r \right) \bm{W}_k,
\end{align}
where $\bm{\Delta}_k^r = [\nabla_{\bm{\phi}_{i,k}} f_i(\bm{\theta}_i^r) \text{ for } i \in \mathcal{V}_k]$ is the concatenation of local gradients with respect to $\phi_{i,k}$ for modality $k$ and clients $i \in \mathcal{V}_k$ at iteration $r$, $\eta_\phi$ is the learning rate, and $\bm{W}_k$ satisfies Assumption \ref{ass_W_double_stochasticity} and Assumption \ref{ass_subgraph_connectivity} holds. Furthermore, we define $\bar{\bm{\phi}}_k := \frac{1}{|\mathcal{V}_k|} \sum_{i \in \mathcal{V}_k} \bm{\phi}_{i,k}$ as the average feature extractor parameters for modality $k$ over all clients $i \in \mathcal{V}_k$.

Our analysis proceeds in two main steps. First, in Lemma \ref{lemma1}, we show that the average of the feature encoders follows a standard gradient descent-like trajectory. Second, in Lemma \ref{lemma2}, we leverage this result to establish a one-step descent inequality for a global objective function evaluated at a modified parameter vector that incorporates the averaged encoders. This lemma decouples the analysis of the encoders from the other parameters and forms the foundation for our main convergence theorem, which is similar to that in \cite{tian2023distributed}.
Specifically, for client $i$, we define the modified parameter vector $\tilde{\bm{\theta}}_i^r$ as 
$\tilde{\bm{\theta}}_i^r = \left[ (\bar{\bm{\phi}}_k^r)^\top_{k \in \mathcal{M}_i}, (\bm{\beta}_{i,k}^r)^\top_{k \in \mathcal{M}_i}, (\bm{\omega}_i^r)^\top \right]^\top.$
This vector uses the averaged feature extractors $\bar{\bm{\phi}}_k^r$ for $k \in \mathcal{M}_i$, and its local parameters $\bm{\beta}_{i,k}^r$ and $\bm{\omega}_i^r$ from iteration $r$. Specifically, in our analysis, the performance is evaluated under the averaged global model parameters $\bar{\bm{\phi}}_k^r$ across clients, which are collaboratively updated using DSGD for each modality, together with the task-specific parameters, including the local attention weights $\bm{\beta}_{i,k}^r$ and classifier parameters $\bm{\omega}_i^r$. This motivates the definition of the modified parameter vector $\tilde{\bm{\theta}}_i^r$, which captures both shared and task-specific components, enabling a unified analysis of the model's behavior. Given Assumptions \ref{ass_W_double_stochasticity}-\ref{ass_bounded_domain}, we state the following lemmas.

\begin{lemma}
\label{lemma1}
For each modality $k$, the average feature extractor $\bar{\bm{\phi}}_k^r$ evolves as
\begin{align}
\bar{\bm{\phi}}_k^{r+1} - \bar{\bm{\phi}}_k^r = -\eta_{\phi} \, \bar{\bm{g}}_k^r,
\end{align}
where $\bar{\bm{g}}_k^r = \frac{1}{|\mathcal{V}_k|} \sum_{i \in \mathcal{V}_k} \nabla_{\bm{\phi}_{i,k}} f_i(\bm{\theta}_i^r)$ is the average gradient for modality $k$ at iteration $r$.
\end{lemma}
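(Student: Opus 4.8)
The plan is to derive the identity directly from the two-step encoder update, namely the local gradient step \eqref{eq_encoder_GD_step} followed by the gossip aggregation \eqref{eq_encoder_agg_step}, and to show that the aggregation step leaves the cross-client mean unchanged, so that only the averaged gradient contributes to the increment. The guiding principle is the classical consensus-preservation property of doubly stochastic mixing matrices: averaging over clients commutes with the gossip step, which is exactly where Assumption \ref{ass_W_double_stochasticity} enters.

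First I would insert the aggregation rule into the definition $\bar{\bm{\phi}}_k^{r+1} = \frac{1}{|\mathcal{V}_k|}\sum_{i\in\mathcal{V}_k}\bm{\phi}_{i,k}^{r+1}$, treating $w_{ij}^k = 0$ for $j\notin\mathcal{N}_i^k$ so that the inner sum may be extended over all of $\mathcal{V}_k$, and then exchange the order of the two summations. The coefficient of each half-step iterate $\bm{\phi}_{j,k}^{r+\frac12}$ becomes the column sum $\sum_{i\in\mathcal{V}_k}w_{ij}^k$, which equals $1$ by the double-stochasticity condition of Assumption \ref{ass_W_double_stochasticity}. This collapses the mixing step to a plain average,
\begin{align}
\bar{\bm{\phi}}_k^{r+1} = \frac{1}{|\mathcal{V}_k|}\sum_{j\in\mathcal{V}_k}\bm{\phi}_{j,k}^{r+\frac12}.
\end{align}
Substituting the gradient step \eqref{eq_encoder_GD_step} for $\bm{\phi}_{j,k}^{r+\frac12}$ and splitting the sum by linearity then gives $\bar{\bm{\phi}}_k^{r+1} = \bar{\bm{\phi}}_k^{r} - \eta_\phi\,\bar{\bm{g}}_k^r$, using the definitions $\bar{\bm{\phi}}_k^r = \frac{1}{|\mathcal{V}_k|}\sum_{j\in\mathcal{V}_k}\bm{\phi}_{j,k}^r$ and $\bar{\bm{g}}_k^r = \frac{1}{|\mathcal{V}_k|}\sum_{j\in\mathcal{V}_k} \nabla_{\bm{\phi}_{j,k}} f_j(\bm{\theta}_j^r)$. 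Rearranging yields the claimed identity.

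There is no genuine obstacle here: the result is a one-line consequence of double stochasticity, and in particular the smoothness (Assumption \ref{ass_smoothness}) and boundedness (Assumption \ref{ass_bounded_domain}) assumptions play no role in this lemma. The only point deserving attention is to invoke the correct stochasticity condition after exchanging the summations---it is the \emph{column} sums $\sum_{i\in\mathcal{V}_k}w_{ij}^k$, not the row sums, that must equal one. Equivalently, the same identity follows immediately from the matrix form $\bm{\Phi}_k^{r+1} = (\bm{\Phi}_k^r - \eta_\phi\bm{\Delta}_k^r)\bm{W}_k$ by right-multiplying with $\frac{1}{|\mathcal{V}_k|}\bm{1}$ and using $\bm{W}_k\bm{1} = \bm{1}$; since the Metropolis--Hastings weights are symmetric, the row and column conditions coincide and either route is valid.
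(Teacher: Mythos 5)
Your proof is correct and follows essentially the same route as the paper, which simply right-multiplies the matrix form $\bm{\Phi}_k^{r+1} = (\bm{\Phi}_k^r - \eta_\phi\bm{\Delta}_k^r)\bm{W}_k$ by $\frac{1}{|\mathcal{V}_k|}\mathbf{1}_{|\mathcal{V}_k|}$ and invokes the stochasticity of $\bm{W}_k$ from Assumption \ref{ass_W_double_stochasticity}. Your element-wise derivation, including the remark that it is the column sums that matter after exchanging the order of summation (harmless here since the Metropolis--Hastings weights are symmetric), is a faithful and slightly more explicit version of the same argument.
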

\begin{lemma}
\label{lemma2}
Under Assumptions 1-3, for the update rules~\eqref{eq_encoder_GD_step}--\eqref{eq_attention_GD_step} with step sizes satisfying $\eta_\beta < \frac{2}{L}$ and $\eta_\phi < \frac{2|\mathcal{V}_k|}{L}$ for all $k \in \mathcal{M}$, the following inequality holds
\begin{align}
&f(\tilde{\bm{\theta}}^{r+1}) 
\leq f(\tilde{\bm{\theta}}^r) - \eta_\beta \left(1 - \frac{L \eta_\beta}{2}\right) \left\| \nabla_{\bm{\beta}} f(\tilde{\bm{\theta}}^r) \right\|^2 \nonumber \\
&\quad - \sum_{k=1}^{M} 
    \eta_\phi |\mathcal{V}_k|^2 \left(1 - \frac{L \eta_\phi}{2|\mathcal{V}_k|} \right) 
    \left\| \nabla_{\bm{\phi}_k} f(\tilde{\bm{\theta}}^r) \right\|^2 \nonumber \\
&\quad +  \left[ \nabla_{\bm{\omega}} f(\tilde{\bm{\theta}}^r)^\top (\bm{\omega}^{r+1} - \bm{\omega}^r) + \frac{L}{2} \left\| \bm{\omega}^{r+1} - \bm{\omega}^r \right\|^2 \right].
\end{align}
where $\tilde{\bm{\theta}} = [\tilde{\bm{\theta}}_1^\top, \ldots, \tilde{\bm{\theta}}_N^\top]^\top$, $\bm{\beta} = \left[ \bm{\beta}_{i,k}^\top \right]_{i \in \mathcal{V},\, k \in \mathcal{M}_i}^\top$, 
$\bm{\omega} = \left[ \bm{\omega}_i^\top \right]_{i \in \mathcal{V}}^\top$, and $f(\tilde{\bm{\theta}}) := \sum_{i=1}^N f_i(\tilde{\bm{\theta}}_i)$.
\end{lemma}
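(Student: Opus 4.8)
The plan is to obtain the one-step inequality directly from the descent lemma implied by $L$-smoothness (Assumption~\ref{ass_smoothness}), applied to the aggregate objective $f(\tilde{\bm{\theta}}) = \sum_{i=1}^N f_i(\tilde{\bm{\theta}}_i)$ between the consecutive modified iterates $\tilde{\bm{\theta}}^r$ and $\tilde{\bm{\theta}}^{r+1}$. Since these two vectors differ only through their three constituent blocks, namely the averaged encoders $\{\bar{\bm{\phi}}_k\}$, the local attention weights $\{\bm{\beta}_{i,k}\}$, and the classifier heads $\{\bm{\omega}_i\}$, both the increment $\tilde{\bm{\theta}}^{r+1} - \tilde{\bm{\theta}}^r$ and the inner product $\nabla f(\tilde{\bm{\theta}}^r)^\top(\tilde{\bm{\theta}}^{r+1} - \tilde{\bm{\theta}}^r)$ split as a sum over these three blocks. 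I would treat each block separately and then recombine the three contributions into the stated bound.

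For the encoder block I would substitute Lemma~\ref{lemma1}, which gives the exact identity $\bar{\bm{\phi}}_k^{r+1} - \bar{\bm{\phi}}_k^r = -\eta_\phi\bar{\bm{g}}_k^r$. The key structural observation is that in $\tilde{\bm{\theta}}$ the single averaged encoder $\bar{\bm{\phi}}_k$ is shared by all $|\mathcal{V}_k|$ clients possessing modality $k$; hence the chain rule yields $\nabla_{\bm{\phi}_k} f(\tilde{\bm{\theta}}^r) = \sum_{i\in\mathcal{V}_k}\nabla_{\bm{\phi}_{i,k}} f_i(\tilde{\bm{\theta}}_i^r)$, which is exactly $|\mathcal{V}_k|$ times the averaged gradient $\bar{\bm{g}}_k^r$ driving the update. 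Substituting this relation into the linear term $\nabla_{\bm{\phi}_k}f(\tilde{\bm{\theta}}^r)^\top(\bar{\bm{\phi}}_k^{r+1}-\bar{\bm{\phi}}_k^r)$ and the quadratic term $\tfrac{L}{2}\|\bar{\bm{\phi}}_k^{r+1}-\bar{\bm{\phi}}_k^r\|^2$, then collecting the two contributions, produces a term proportional to $-(1-\tfrac{L\eta_\phi}{2|\mathcal{V}_k|})\|\nabla_{\bm{\phi}_k}f(\tilde{\bm{\theta}}^r)\|^2$; summing over $k\in\mathcal{M}$ gives the encoder line of the claim, with the step-size restriction $\eta_\phi<\tfrac{2|\mathcal{V}_k|}{L}$ forcing the bracketed factor to be strictly positive so that this is a genuine decrease.

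The attention block is simpler, because each $\bm{\beta}_{i,k}$ enters only the single summand $f_i$, so $\nabla_{\bm{\beta}_{i,k}}f=\nabla_{\bm{\beta}_{i,k}}f_i$ with no chain-rule multiplicity. Plugging the plain gradient step~\eqref{eq_attention_GD_step} into the linear and quadratic parts and summing over $(i,k)$ gives the standard contraction $-\eta_\beta(1-\tfrac{L\eta_\beta}{2})\|\nabla_{\bm{\beta}}f(\tilde{\bm{\theta}}^r)\|^2$, which is negative under $\eta_\beta<\tfrac{2}{L}$. For the classifier block I would deliberately \emph{not} expand the update~\eqref{eq:task_specific_update}, since $\bm{\omega}$ is driven by the sheaf-Laplacian gradient rather than a plain descent step on $f$; instead I would retain its linear and quadratic contributions verbatim as the bracketed remainder $\nabla_{\bm{\omega}}f(\tilde{\bm{\theta}}^r)^\top(\bm{\omega}^{r+1}-\bm{\omega}^r)+\tfrac{L}{2}\|\bm{\omega}^{r+1}-\bm{\omega}^r\|^2$, deferring its control to the main convergence theorem. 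Summing the three blocks then reproduces the stated inequality.

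I expect the main obstacle to be the mismatch in gradient evaluation points: the actual updates use $\nabla f_i$ evaluated at the true local iterate $\bm{\theta}_i^r$, which carries the un-averaged encoder $\bm{\phi}_{i,k}^r$, whereas the descent inequality is phrased at $\tilde{\bm{\theta}}_i^r$, which carries the consensus encoder $\bar{\bm{\phi}}_k^r$. The clean form of the lemma effectively identifies these two points, which is exact only once the encoders have reached consensus; making the substitution $\bar{\bm{g}}_k^r=\tfrac{1}{|\mathcal{V}_k|}\nabla_{\bm{\phi}_k}f(\tilde{\bm{\theta}}^r)$ fully rigorous therefore requires controlling the encoder consensus error $\|\bm{\phi}_{i,k}^r-\bar{\bm{\phi}}_k^r\|$, where the spectral-gap condition (Assumption~\ref{ass_subgraph_connectivity}) and the uniform boundedness of the parameters (Assumption~\ref{ass_bounded_domain}) would enter. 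Reconciling the evaluation points and correctly bookkeeping the $|\mathcal{V}_k|$ multiplicities generated by the shared encoder are the delicate parts of the argument.
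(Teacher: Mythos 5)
Your proposal follows essentially the same route as the paper's proof: apply the $L$-smoothness descent inequality to each $f_i$ at the modified iterate $\tilde{\bm{\theta}}_i^r$, decompose the increment block-wise into $\bar{\bm{\phi}}_k$, $\bm{\beta}_{i,k}$, and $\bm{\omega}_i$, substitute Lemma~\ref{lemma1} and the plain gradient step for the first two blocks, and carry the $\bm{\omega}$ terms verbatim into the bound. The evaluation-point mismatch you flag (update gradients computed at $\bm{\theta}_i^r$ versus the descent inequality phrased at $\tilde{\bm{\theta}}_i^r$) is genuine, but the paper's own proof silently makes the same identification and omits the detailed derivation, so your treatment is, if anything, more candid about the one delicate step.
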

\begin{proof}
Proofs of Lemma \ref{lemma1} and \ref{lemma2} can be found in Appendix \ref{appendix_proof_lemmas}. 
\end{proof}
Before stating Theorem~\ref{theorem1} to facilitate the analysis, we define the global objective function $\Psi(\tilde{\bm{\theta}}, \bm{P})$ as
\begin{align}
\label{eq_global_obj_psi_analysis}
\Psi(\tilde{\bm{\theta}}, \bm{P}) 
&:= \sum_{i=1}^N f_i(\tilde{\bm{\theta}}_i) + \frac{\lambda}{2} \bm{\omega}^\top L_\mathcal{F}(\bm{P}) \bm{\omega},
\end{align}
where the Sheaf Laplacian $L_\mathcal{F}$ is defined as in \eqref{eq_quadratic_sheaf_laplacian_omega}.
\begin{theorem}
\label{theorem1}
Under Assumptions 1-4, with step sizes $\alpha < \frac{2}{L}$, $\eta_\beta < \frac{2}{L}$, $\eta_\phi < \frac{2|\mathcal{V}_k|}{L}$ for all $k \in \mathcal{M}$, and $\eta < \frac{2}{\lambda N D_\omega^2}$, the average squared gradient norm of $\Psi$ after $R$ iterations satisfies
\begin{align}
\frac{1}{R} \sum_{r=0}^{R-1} \left\| \nabla \Psi(\tilde{\bm{\theta}}^r, \bm{P}^r) \right\|^2 \leq \frac{1}{\rho R} \left( \Psi(\tilde{\bm{\theta}}^0, \bm{P}^0) - \Psi^* \right),
\end{align}
where $\rho\!=\!\min\left\{\alpha\left(1\!-\!\frac{L\alpha}{2}\right)\!,\eta_\beta\left(1-\frac{L\eta_\beta}{2}\right)\!,\rho_\phi,\eta\left(1-\frac{\eta\lambda D_\omega^2}{2}\right)\right\}$, $\rho_\phi=\min_k\left\{\eta_\phi|\mathcal{V}_k|^2\left(1-\frac{L\eta_\phi}{2|\mathcal{V}_k|}\right)\right\}$, and $\Psi^*$ is the lower bound of $\Psi$.
\end{theorem}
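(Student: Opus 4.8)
The plan is to derive a single-iteration descent inequality for the global objective $\Psi$ and then telescope it over $r = 0, \dots, R-1$. I would begin from the decomposition $\Psi(\tilde{\bm{\theta}}, \bm{P}) = f(\tilde{\bm{\theta}}) + \frac{\lambda}{2}\bm{\omega}^\top L_\mathcal{F}(\bm{P})\bm{\omega}$ and treat the four parameter blocks $(\bm{\phi}, \bm{\beta}, \bm{\omega}, \bm{P})$ in a Gauss--Seidel fashion, exploiting that the Sheaf-Laplacian regularizer is independent of $\bm{\phi}$ and $\bm{\beta}$, so that $\nabla_{\bm{\phi}}\Psi = \nabla_{\bm{\phi}} f$ and $\nabla_{\bm{\beta}}\Psi = \nabla_{\bm{\beta}} f$. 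Lemma~\ref{lemma2} already supplies the descent contributed by the $\bm{\beta}$ and $\bm{\phi}$ updates, yielding the coefficients $\eta_\beta(1 - \frac{L\eta_\beta}{2})$ and $\rho_\phi$ after lower-bounding the per-modality sum $\sum_k \eta_\phi|\mathcal{V}_k|^2(1 - \frac{L\eta_\phi}{2|\mathcal{V}_k|})\|\nabla_{\bm{\phi}_k} f\|^2 \ge \rho_\phi\|\nabla_{\bm{\phi}}\Psi\|^2$, and leaves the $\bm{\omega}$ contribution in the generic smoothness form.

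For the $\bm{\omega}$ block I would observe that update \eqref{eq:task_specific_update} is exactly a gradient step on $\Psi$ with respect to $\bm{\omega}$, since the term $\lambda\sum_{j}(\bm{P}_{ij}^r)^\top(\bm{P}_{ij}^r\bm{\omega}_i^r - \bm{P}_{ji}^r\bm{\omega}_j^r)$ is precisely $\nabla_{\bm{\omega}} R$ evaluated at $\bm{P}^r$. Combining the generic $\bm{\omega}$ terms of Lemma~\ref{lemma2} with the exact second-order expansion of the quadratic $R(\cdot, \bm{P}^r)$ in $\bm{\omega}$ collapses the first-order parts into $\nabla_{\bm{\omega}}\Psi^\top(\bm{\omega}^{r+1} - \bm{\omega}^r)$; substituting $\bm{\omega}^{r+1} - \bm{\omega}^r = -\alpha\nabla_{\bm{\omega}}\Psi$ then produces the descent term $-\alpha(1 - \frac{L\alpha}{2})\|\nabla_{\bm{\omega}}\Psi\|^2$ up to the residual second-order term $\frac{\lambda\alpha^2}{2}(\nabla_{\bm{\omega}}\Psi)^\top L_\mathcal{F}(\bm{P}^r)(\nabla_{\bm{\omega}}\Psi)$, which must be controlled via the positive-semidefiniteness of $L_\mathcal{F}$ and the compactness guaranteed by Assumption~\ref{ass_bounded_domain}.

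For the $\bm{P}$ block, since $f$ does not depend on $\bm{P}$ we have $\nabla_{\bm{P}}\Psi = \nabla_{\bm{P}} R$, and update \eqref{eq:projection_update} is a gradient step on the quadratic $R(\bm{\omega}^{r+1}, \cdot)$. The key estimate is that $R$ is smooth in $\bm{P}$ with constant of order $\lambda N D_\omega^2$: each block Hessian is governed by $\bm{\omega}_i\bm{\omega}_i^\top$, whose norm is bounded by $D_\omega^2$ through Assumption~\ref{ass_bounded_domain}, and summing across the network yields the factor $N$. The step-size condition $\eta < \frac{2}{\lambda N D_\omega^2}$ then makes $\eta(1 - \frac{\eta\lambda D_\omega^2}{2})$ a valid positive descent coefficient for $\|\nabla_{\bm{P}}\Psi\|^2$.

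Finally I would assemble the four block estimates into $\Psi(\tilde{\bm{\theta}}^{r+1}, \bm{P}^{r+1}) \le \Psi(\tilde{\bm{\theta}}^r, \bm{P}^r) - \rho\|\nabla\Psi(\tilde{\bm{\theta}}^r, \bm{P}^r)\|^2$ with $\rho$ the minimum of the four coefficients, using $\|\nabla\Psi\|^2 = \|\nabla_{\bm{\phi}}\Psi\|^2 + \|\nabla_{\bm{\beta}}\Psi\|^2 + \|\nabla_{\bm{\omega}}\Psi\|^2 + \|\nabla_{\bm{P}}\Psi\|^2$; telescoping over $r$ and invoking the lower bound $\Psi \ge \Psi^*$ gives the stated $\mathcal{O}(1/R)$ rate. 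The main obstacle is the regularizer-induced coupling between $\bm{\omega}$ and $\bm{P}$: controlling the $L_\mathcal{F}(\bm{P}^r)$-weighted residual in the $\bm{\omega}$ step, where $\bm{P}$ itself is not bounded by the assumptions, and correctly tracking the sequential ordering so that the $\bm{P}$ step is analyzed at $\bm{\omega}^{r+1}$, are the delicate points. A secondary subtlety is reconciling the gradients taken at the true iterate $\bm{\theta}^r$ in \eqref{eq:task_specific_update} with those at the averaged-encoder surrogate $\tilde{\bm{\theta}}^r$ used throughout Lemmas~\ref{lemma1}--\ref{lemma2}, which relies on the consensus behaviour established in Lemma~\ref{lemma1} under the spectral-gap Assumption~\ref{ass_subgraph_connectivity}.
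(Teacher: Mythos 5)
Your plan follows essentially the same route as the paper's proof: Lemma~\ref{lemma2} supplies the $\bm{\beta}$ and $\bm{\phi}$ descent terms, the $\bm{\omega}$ and $\bm{P}$ updates are interpreted as gradient steps on $\Psi$ (with the $\bm{P}$ step evaluated at $\bm{\omega}^{r+1}$ and its smoothness bounded via $D_\omega$ from Assumption~\ref{ass_bounded_domain}), and the four block estimates are combined into a one-step descent inequality and telescoped. The two delicate points you flag --- the $L_\mathcal{F}(\bm{P}^r)$-weighted second-order residual in the $\bm{\omega}$ step (where $\bm{P}^r$ is not bounded by the stated assumptions) and the reconciliation of gradients at $\bm{\theta}_i^r$ with those at the averaged surrogate $\tilde{\bm{\theta}}_i^r$ --- are in fact passed over silently in the paper's own derivation, so your outline is no less rigorous than the published argument.
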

\begin{proof}
The proof is deferred to Appendix \ref{appendix_proof_theorem1}.
\end{proof}

\section{Use Cases Studies}\label{Simulation_section}
In large-scale 6G communication scenarios, distributed clients collaborate to solve wireless communication tasks using their multimodal datasets. To assess the effectiveness of the proposed algorithms, we conduct a comprehensive evaluation across two specific use cases in 6G wireless communication: cooperative link blockage prediction \cite{wu2023proactively} and distributed mmWave beamforming prediction \cite{charan2022towards} as shown in Fig. \ref{fig:system_model}. 

We compare our proposed algorithms, Sheaf-DMFL and Sheaf-DMFL-Att, with three different baseline methods: decentralized stochastic gradient descent (DSGD), DMML-KL \cite{yin2024knowledge}, a stand-alone implementation where each client trains solely on local multimodal data referred to as ``Local'', and a centralized baseline trained on the full set of combined datasets to represent an upper bound on performance. Here, DSGD is implemented as a unimodal baseline where encoders for the same modality are averaged across clients, but no information is shared between different modalities. Moreover, for client groups that have access to multiple modalities, we report the best performance among unimodal models for both DSGD and Local baselines. For a fair comparison, the centralized baseline utilizes both modalities, following the same architecture described in Section \ref{proposed_algorithm}-A.

\subsection{Scenario I: Link Blockage Prediction}

\subsubsection{Scenario and Dataset Description}
In this use case, as shown in Fig. \ref{fig:system_model}(a), we consider a mmWave communication system where a BS equipped with a $U$-element phased antenna array serves a stationary user using directional beamforming. To achieve situational awareness of the surrounding environment and potential dynamic obstacles, the BS is also equipped with a LiDAR sensor and a camera. These sensors provide complementary spatial information useful for predicting link blockages.

The BS utilizes a predefined beamforming codebook $\mathcal{F} = \{ \bm{f}_q \}_{q=1}^{Q}$, where each beam $\bm{f}_q$ steers the signal towards a direction to cover a certain field of view. In the setup mentioned in \cite{wu2023proactively}, a phased array with $U = 16$ elements is used and a codebook of $Q = 64$ beams, uniformly covering the angular range $[-\pi/4, \pi/4]$.

To model wireless dynamics, we assume a block fading channel over duration $\tau_B$, with OFDM transmission over $C$ subcarriers. Let $\bm{h}_c[t] \in \mathbb{C}^U$ represent the channel vector at subcarrier $c$ and time $t \in \mathbb{Z}$. When beam $\bm{f}_q$ is applied, the received signal at subcarrier $c$ is $s_{c, q}[t] = \bm{h}_c^{H}[t] \bm{f}_q x[t] + n_c[t]$, where $s_{c, q}[t]$ is a symbol with unit average power, and $n_c[t] \sim \mathcal{CN}(0, \sigma_n^2)$ is AWGN.
At time $t$, the BS receives pilot signals from the user over all $Q$ beams. The aggregated received power vector is defined as $\bm{s}[t] = \left[ |s_1[t]|^2, \dots, |s_Q[t]|^2 \right]^\top$, where $|s_q[t]|^2 = \sum_{c=1}^C |s_{c,q}[t]|^2$ is the total power at beam $q$.
Let $T_{\text{ob}}$ be the observation window length. The sequence of past beam power vectors is $\mathcal{R}_{\text{ob}} = \left\{ \bm{r}[t+n] \right\}_{n = -T_{\text{ob}} + 1}^{0}$.

In parallel, the LiDAR sensor provides a point cloud $L[t] \in \mathbb{R}^{P \times 2}$, where each point is an (angle, distance) pair. The LiDAR observation window is $\mathcal{L}_{\text{ob}} = \left\{ L[t+n] \right\}_{n = -T_{\text{ob}} + 1}^{0}$.
The camera provides RGB images $I[t] \in \mathbb{R}^{H \times W \times C}$. The image observation window is $\mathcal{I}_{\text{ob}} = \left\{ I[t+n] \right\}_{n = -T_{\text{ob}} + 1}^{0}$.

We define the complete multimodal observation as $\mathcal{S}_{\text{ob}} = \{ \mathcal{R}_{\text{ob}}, \mathcal{L}_{\text{ob}}, \mathcal{I}_{\text{ob}} \}$.
Let $p[t] \in \{0,1\}$ denote the binary link status at time $t$, where $p[t]=1$ indicates a blocked LOS link and $p[t]=0$ otherwise. We define the future blockage indicator $b[t]$ as:
\begin{equation}
    b[t] = 
    \begin{cases}
        0, & \text{if } p[t+n] = 0, \ \forall n \in \{1, \dots, T_f\} \\
        1, & \text{otherwise}.
    \end{cases}
\end{equation}
where $T_f$ denotes the prediction horizon.

The goal is to learn a function $\chi_{\bm{\theta}}: \mathcal{S}_{\text{ob}} \rightarrow \hat{b}$, parameterized by $\bm{\theta}$, that predicts the future blockage indicator $\hat{b}$ from multimodal observation $\mathcal{S}_{\text{ob}}$, by minimizing the loss of binary classification $ f(\bm{\theta}) = \sum_{j=1}^{N_t} \ell\left( \chi_{\bm{\theta}}(\mathcal{S}_{\text{ob},j}), b^*_{j} \right)$, where $\ell(\cdot)$ denotes the binary cross-entropy loss and $N_t$ is the number of training samples.

We extend this setup to a distributed system with $N$ BSs, each with access to a local dataset consisting of a subset of modalities $\mathcal{M}_i \subseteq \{\mathcal{R}_{\text{ob}}, \mathcal{L}_{\text{ob}}, \mathcal{I}_{\text{ob}}\}$, where $|\mathcal{M}_i| \leq 3$.

\subsubsection{Experimental Settings}

We consider $N=20$ BSs, each serving a single user and having access to a local dataset consisting of a subset of modalities $\mathcal{M}_i \subseteq \{\mathcal{R}_{\text{ob}}, \mathcal{L}_{\text{ob}}, \mathcal{I}_{\text{ob}}\}$, where $|\mathcal{M}_i| \leq 3$. Each data point consists of multimodal observations collected over an observation window of $T_{\text{ob}} = 16$ time instances (i.e., 1.6 seconds), and the goal is to predict link blockage occurrence within a future prediction horizon of $T_f = 5$ instances (i.e., 1 second), following the problem formulation described in \cite{wu2023proactively}. The BSs communicate over a predefined graph to collaboratively learn models without sharing raw data.
Based on their available modalities, clients are divided into three groups: \textbf{Group 1:} LiDAR and Images, \textbf{Group 2:} LiDAR and mmWave, and \textbf{Group 3:} Images and mmWave. The client grouping and communication topology are shown in Figure~\ref{fig:client_group_blockage}.

\begin{figure}[tb]
    \centering
    \includegraphics[width=0.8\linewidth]{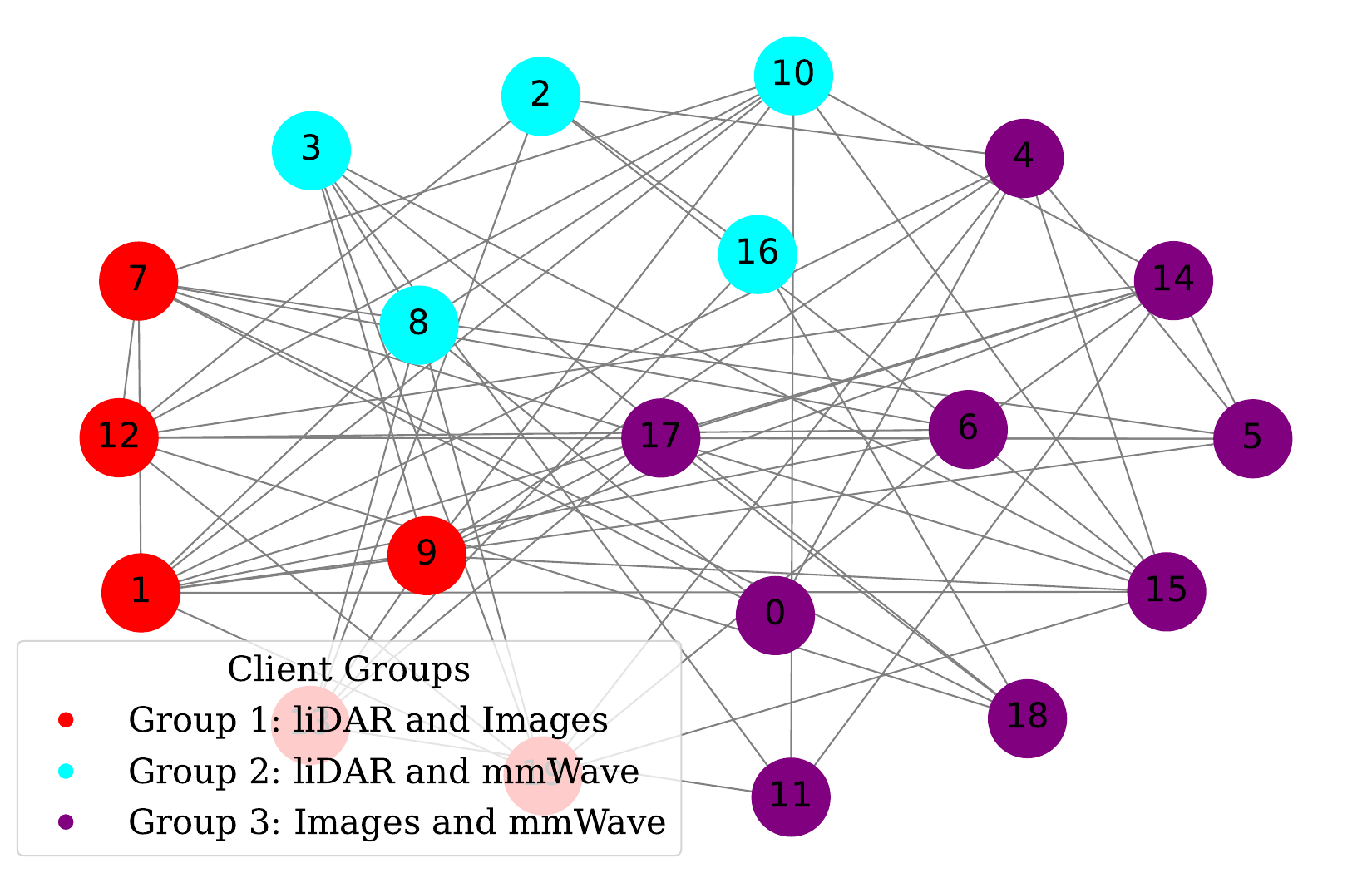}
    \caption{Client groups (BSs) with available modalities over a communication graph}
    \label{fig:client_group_blockage}
\end{figure}

Each modality is processed by a dedicated convolutional neural network (CNN) tailored to its input shape. The LiDAR feature extractor receives $16 \times 2$-channel polar representations and applies three CNN blocks, each comprising a \texttt{Conv2d-ReLU-MaxPool-BatchNorm} sequence, progressively. An adaptive average pooling layer followed by a dropout and a fully connected layer to produce a compact embedding. The RGB camera model operates on stacked image sequences of shape $16 \times 64 \times 64$, using three CNN stages with channel sizes 32, 64, and 128. The resulting $8 \times 8$ feature maps are flattened, passed through a dropout layer, and transformed via a two-layer MLP into the feature representation.

The mmWave feature extractor processes $16 \times 64$ beam energy maps. Two CNN stacks reduce width while preserving height, followed by two standard CNN blocks to capture spatiotemporal structure. For multimodal fusion, we employ an attention-based fusion classifier, where each feature extractor's backbone provides an embedding vector. Modality-specific attention weights are computed via an MLP and used to perform a weighted feature fusion. The resulting aggregated representation is passed to a final linear layer for classification.

\subsubsection{Experimental Results}

\begin{figure*}[t]
    \centering
    \begin{subfigure}{0.32\textwidth}
        \centering
        \includegraphics[width=\textwidth]{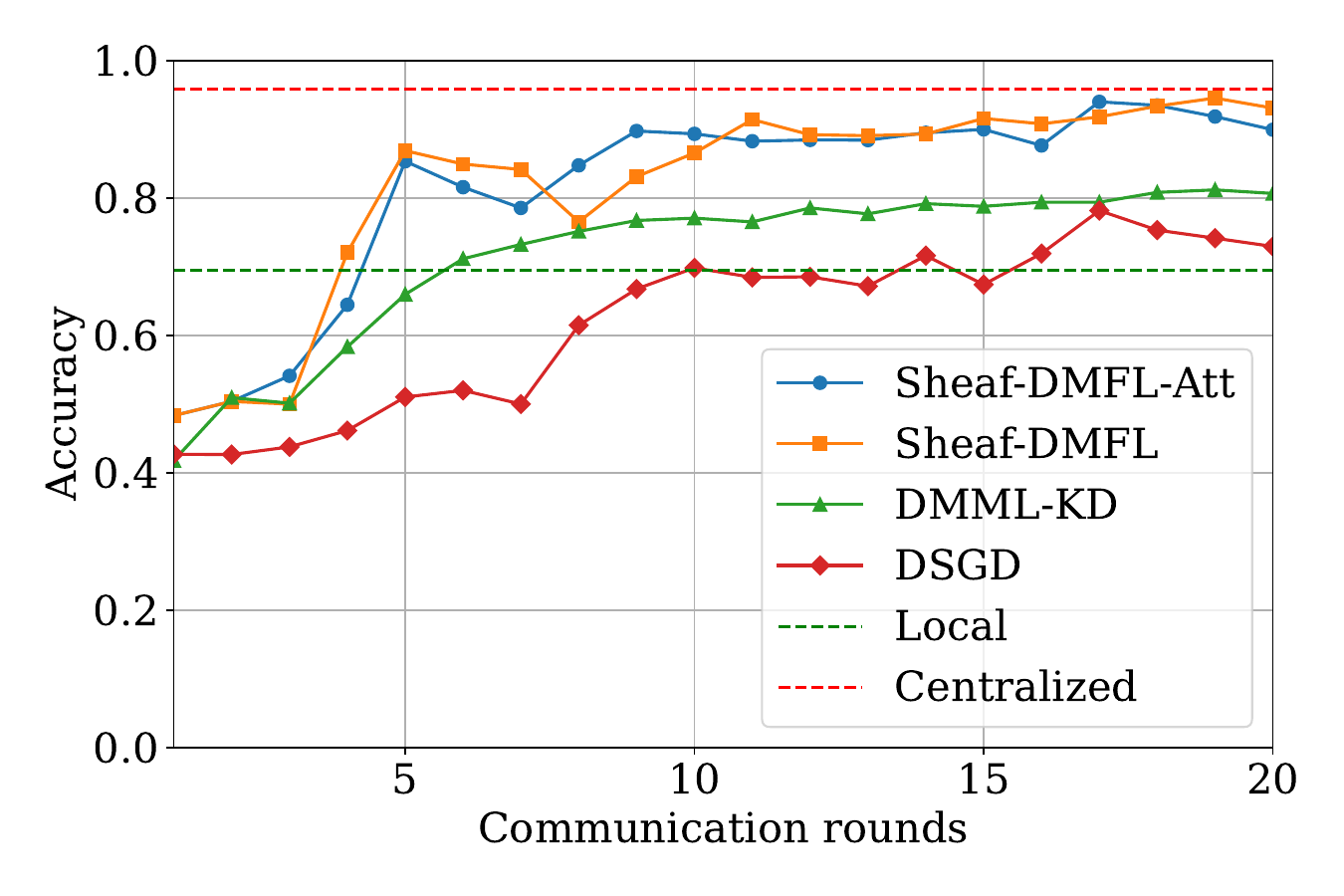}
        \caption{LiDAR + Camera}
    \end{subfigure}
    \begin{subfigure}{0.32\textwidth}
        \centering
        \includegraphics[width=\textwidth]{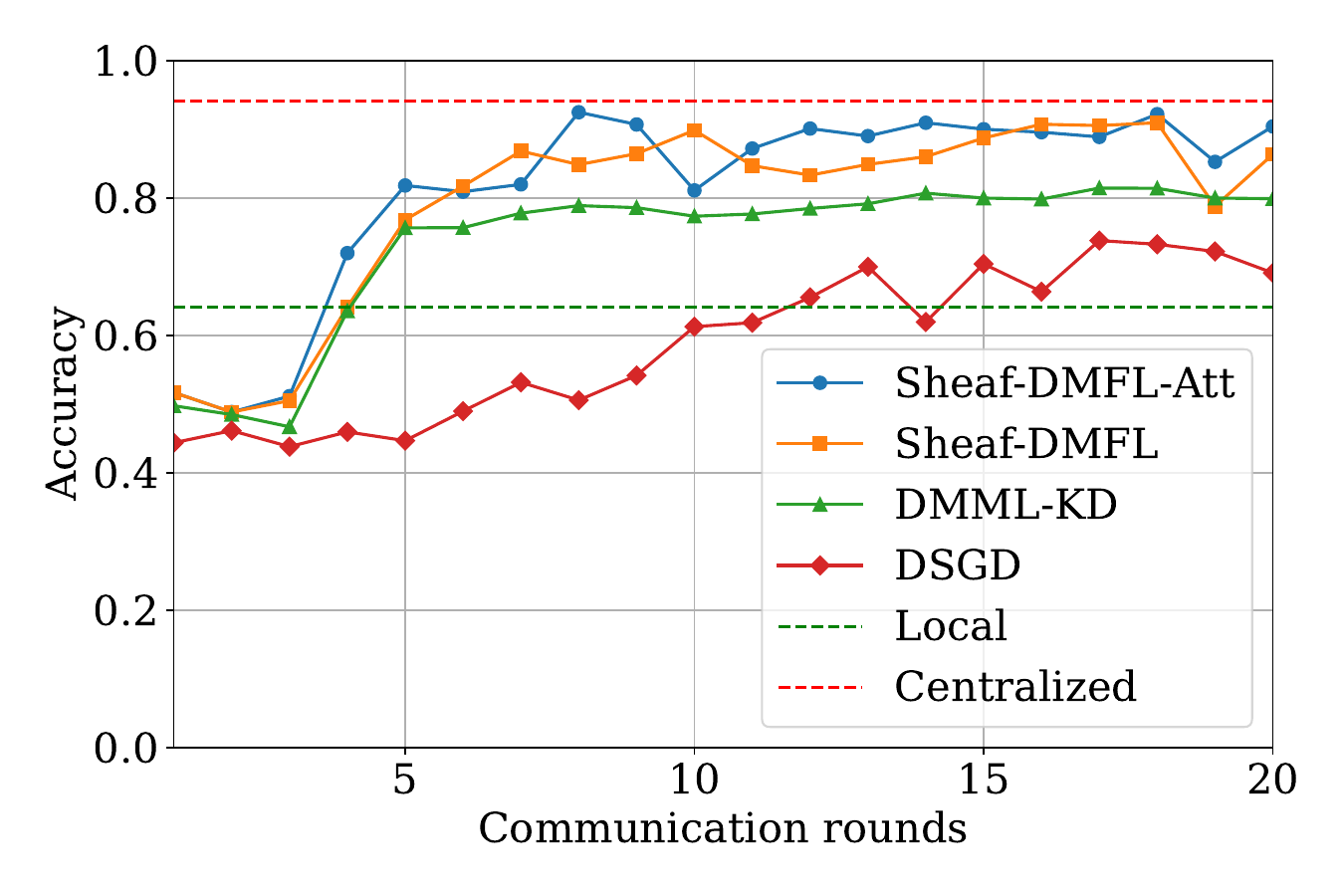}
        \caption{LiDAR + RF}
    \end{subfigure}
    \begin{subfigure}{0.32\textwidth}
        \centering
        \includegraphics[width=\textwidth]{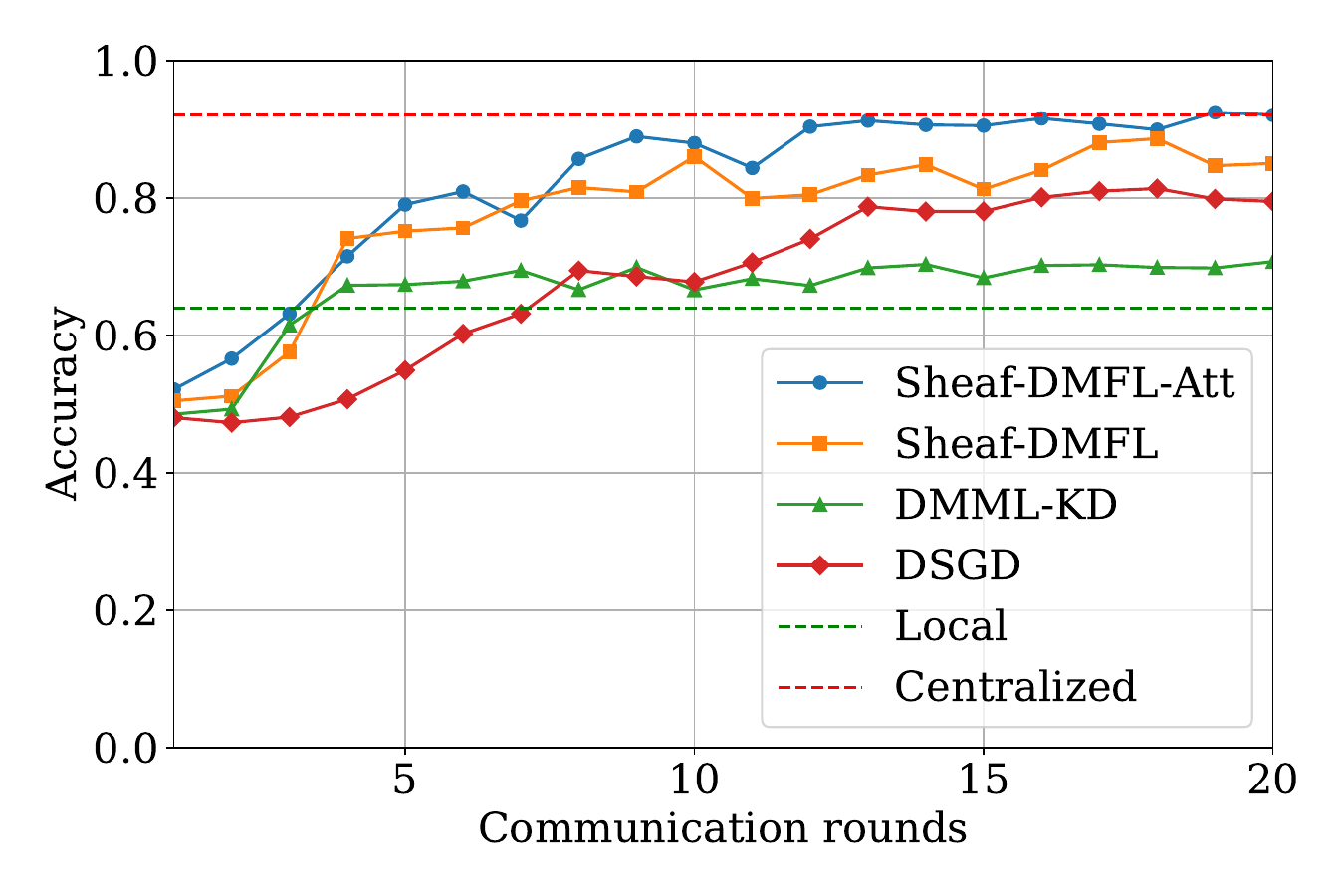}
        \caption{RF + Camera}
    \end{subfigure}

    \caption{Test accuracy as a function of the number of communication rounds for different modality combinations: (a) LiDAR + Camera, (b) LiDAR + RF, and (c) RF + Camera, using the DeepSense blockage prediction dataset.}
    \label{fig:experiment2_comparison}
\end{figure*}

\textbf{Testing accuracy. }Figure~\ref{fig:experiment2_comparison} illustrates the test accuracy for future blockage prediction within a 500\,ms prediction window using 100\,ms sampling intervals, across three multimodal combinations: (a) LiDAR + Camera, (b) LiDAR + RF, and (c) RF + Camera. These configurations represent distributed client groups with partial modality access in mmWave systems. In all cases, both Sheaf-DMFL and Sheaf-DMFL-Att consistently outperform baseline methods (Local, DSGD, and DMML-KD), showcasing their effectiveness in multimodal federated learning. Notably, Sheaf-DMFL-Att achieves the highest accuracy, highlighting the benefits of attention-driven fusion in addressing modality importance imbalance, particularly in settings involving the RF modality, which cannot capture far-ahead blockage events.

The LiDAR + Camera Group (Fig.~\ref{fig:experiment2_comparison}a) benefits from spatially rich modalities, allowing Sheaf-DMFL-Att to converge rapidly toward centralized performance. In contrast, DSGD saturates after only ten communication rounds and yields the lowest accuracy, only marginally better than the local baseline, due to its reliance on single-modality aggregation. The LiDAR + RF case (Fig.~\ref{fig:experiment2_comparison}b) further highlights the utility of Sheaf-DMFL-Att, while the RF + Camera scenario (Fig.~\ref{fig:experiment2_comparison}c) demonstrates its robustness to the absence of LiDAR through adaptive attention mechanisms. The performance gap between Sheaf-DMFL and Sheaf-DMFL-Att is most evident in the latter case, underscoring the importance of attention in noisy or ambiguous settings.

Although DMML-KD performs reasonably well in settings involving the LiDAR modality, it consistently underperforms compared to the proposed methods. Unlike the beam prediction task, which involves a complex 64-class label space, blockage prediction is a binary classification problem, indicating whether a blockage occurs or not, making it inherently simpler and contributing to the overall high accuracy (above 60\%) observed across methods. However, DMML-KD's reliance on naive averaging of feature extractors limits its ability to generalize across heterogeneous modalities, which explains its relatively lower performance in all groups, especially in the absence of important modalities such as LiDAR.

\subsection{Scenario II: Distributed MmWave Beamforming Prediction}
\label{sec:experiment1}
\subsubsection{Scenario and Dataset Description}

We additionally consider a more complicated task: distributed mmWave beamforming prediction \cite{charan2022towards}, as shown in Fig. \ref{fig:system_model}(b). The dataset comprises a co-existing camera, GPS, and mmWave beam training data. The BS is equipped with an $U$-element uniform linear array and employs a predefined beamforming codebook $F = \{ \bm{f}_q \}_{q=1}^{Q}$, where each beamforming vector $\bm{f}_q \in \mathbb{C}^{U \times 1}$ is designed to direct energy toward a specific spatial direction. 
Let $\bm{h}_c[t] \in \mathbb{C}^{U \times 1}$ denote the channel vector between the BS and the drone at subcarrier $c$ and time step $t$. The received signal at the drone can be written as 
\begin{align}
    s_c[t] = \bm{h}_c^H[t] \bm{f}[t] x[t] + n_c[t],
\end{align}
where, $x \in \mathbb{C}$ is the transmitted symbol with power constraint $\mathbb{E}[|x[t]|^2] = P$, $\bm{f}[t] \in F$ is the selected beamforming vector at time $t$, $n_c[t] \sim \mathcal{CN}(0, \sigma^2)$ is additive Gaussian noise, and $\bm{h}_c^H[t]$ is the conjugate transpose (Hermitian) of the channel vector at time $t$. The objective of the BS is to select the optimal beamforming vector $\bm{f}^*[t]$ that maximizes the SNR at the receiver. The optimal beamforming vector is given by 
\begin{align}
   \bm{f}^*[t] = \arg\max_{\bm{f}_q \in F} \frac{1}{K} \sum_{k=1}^{K} \left| \bm{h}_k^H[t] \bm{f}_q \right|^2 
\end{align}

Acquiring explicit CSI in mmWave/THz systems is challenging, making conventional exhaustive beam searches costly. Instead, learning-based approaches predict the optimal beam index using sensory data. Following \cite{charan2022towards}, we consider a vision-based BS system that integrates GPS and real-time drone images. Since GPS alone provides location coordinates, a \textit{GPS modality} fuses latitude, longitude, distance, and height, enabling efficient multimodal data representation.

Formally, the set of available sensory information at time $t$ is defined as $\bm{x}[t] = \{ g[t], I[t] \}$, where $g[t] \in \mathbb{R}^4$ represents the drone's augmented GPS measurements (latitude, longitude, distance, height), $I[t] \in \mathbb{R}^{W \times H \times C}$ is the RGB image captured by the BS. The goal is to learn a mapping function $\chi_{\bm{\theta}}: \bm{x}[t] \rightarrow \hat{f}[t]$, parameterized by $\bm{\theta}$, that predicts the optimal beam index $\hat{f}[t]$ based on sensory inputs by minimizing the objective $f(\bm{\theta}) = \sum_{j=1}^{N_t} \ell\left( \chi_{\bm{\theta}}(\bm{x}_{j}), f^*_{j} \right)$ where $\ell(\cdot)$ is the categorical cross-entropy loss function, and $N_t$ is the size of the dataset.  

\subsubsection{Experimental Settings}
We consider a distributed system of $N = 20$ BSs, each serving a distinct drone and having access only to its local dataset with modalities $\mathcal{M}_i \subset \mathcal{M}$ where $|\mathcal{M}_i| \leq 2$ (GPS and/or images). BSs cooperate via a predefined communication graph that connects clients sharing at least one modality and are grouped by available modalities: Group 1 (GPS only), Group 2 (images only), and Group 3 (both), as illustrated in Fig.~\ref{fig:client_group}. Using the DeepSense Drone dataset~\cite{charan2022towards}, each BS predicts the optimal mmWave beam for its drone. To handle heterogeneous modalities, we use group-specific models: a ResNet-50-based feature extractor (pre-trained on ImageNet) for images and a two-layer MLP for GPS data. Following~\cite{charan2022towards}, the ResNet-50 output is mapped to a beam index of a codebook with size $Q=64$. For multimodal clients, we employ a multi-stream architecture, fusing and passing features through a shared logistic regression layer.

\begin{figure}[b]
    \centering
    \includegraphics[width=0.8\linewidth, trim=10 10 12 12, clip]{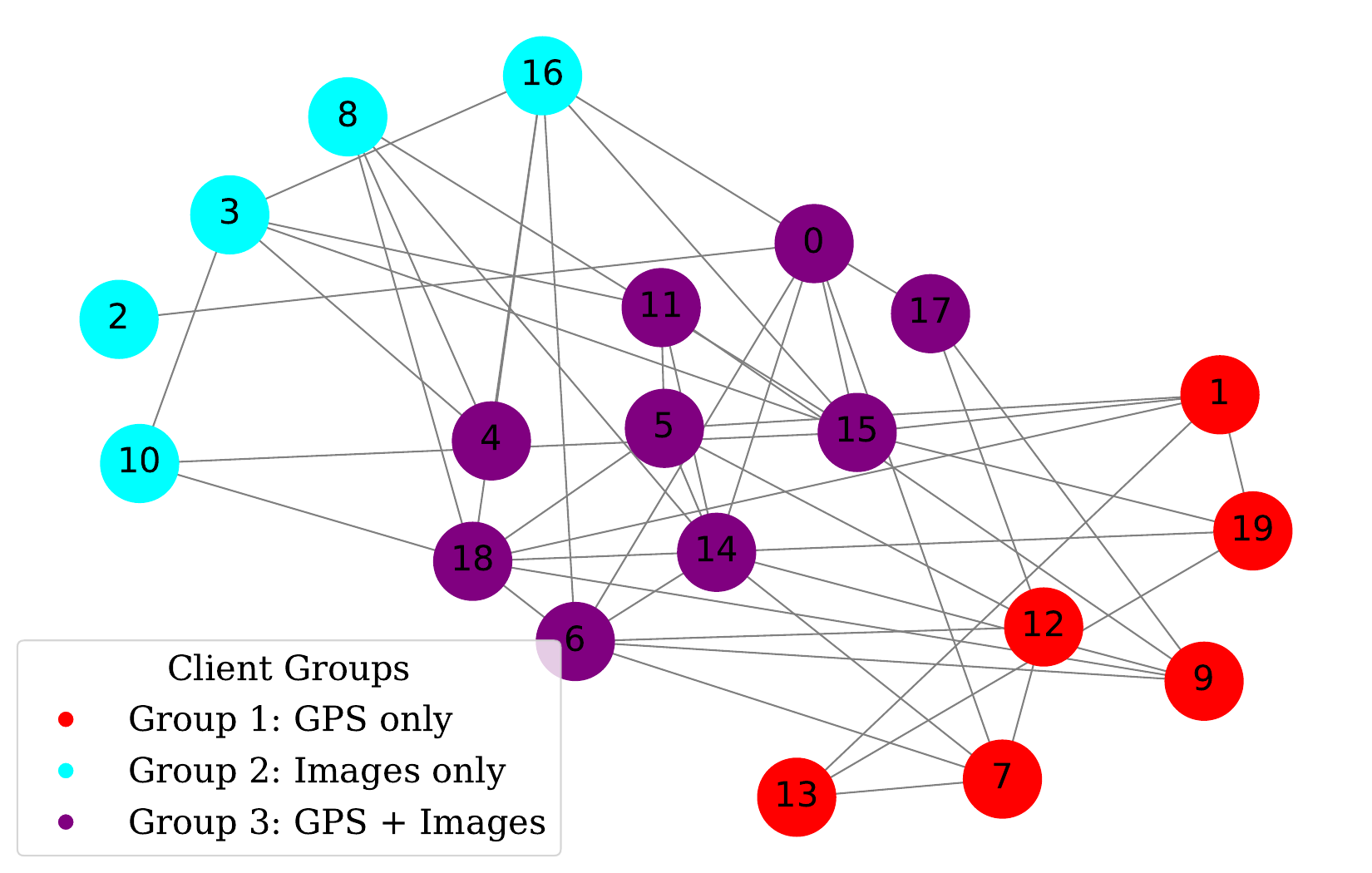}
    \caption{Client groups (BSs) with available modalities over a communication graph.}
    \label{fig:client_group}
\end{figure}

\subsubsection{Experimental Results}
\begin{figure*}[t]
    \centering
    \begin{subfigure}{0.32\textwidth}
        \centering
        \includegraphics[width=\textwidth]{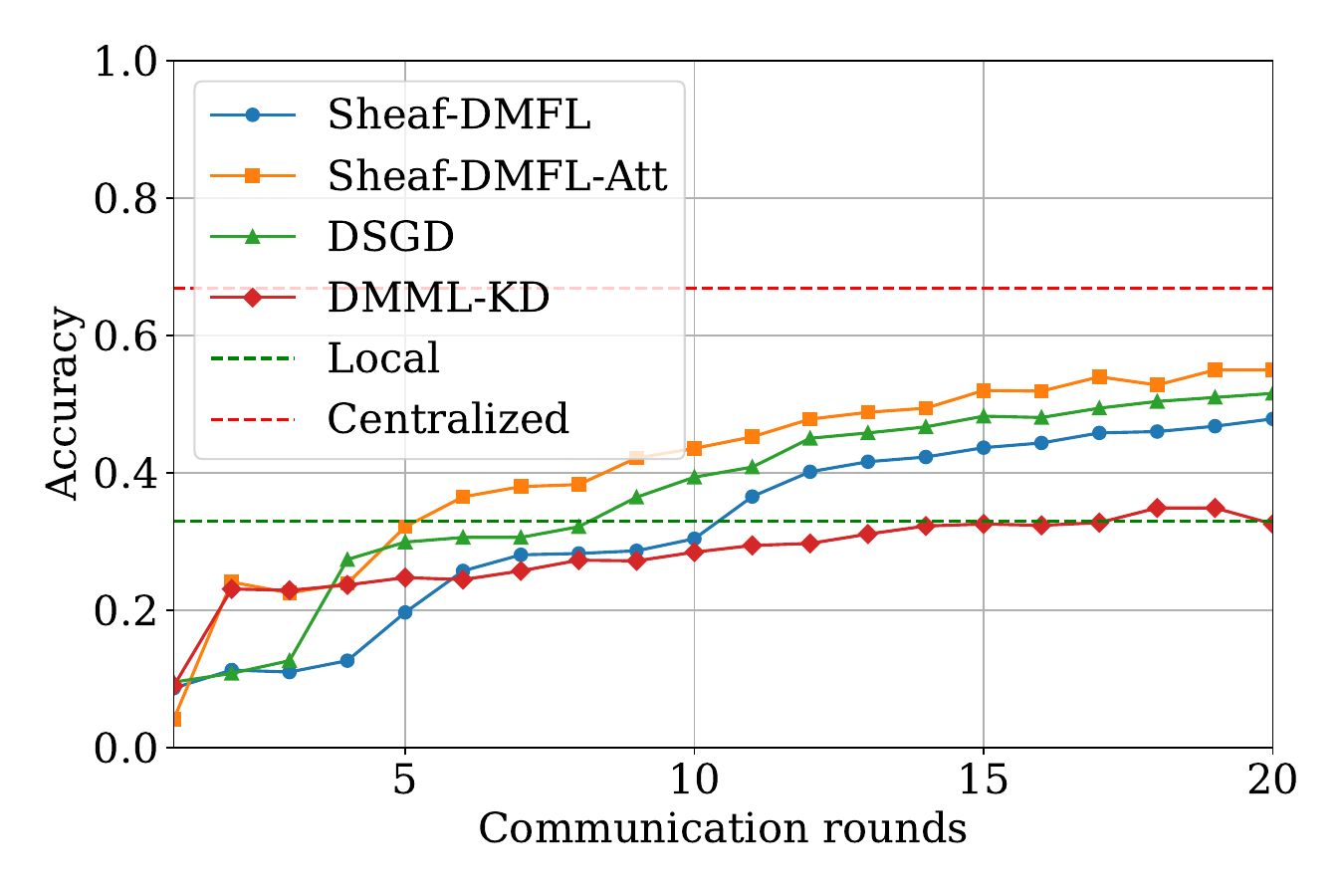}
        \caption{GPS only}
        \label{subfig_drone_group1}
    \end{subfigure}
    \begin{subfigure}{0.32\textwidth}
        \centering
        \includegraphics[width=\textwidth]{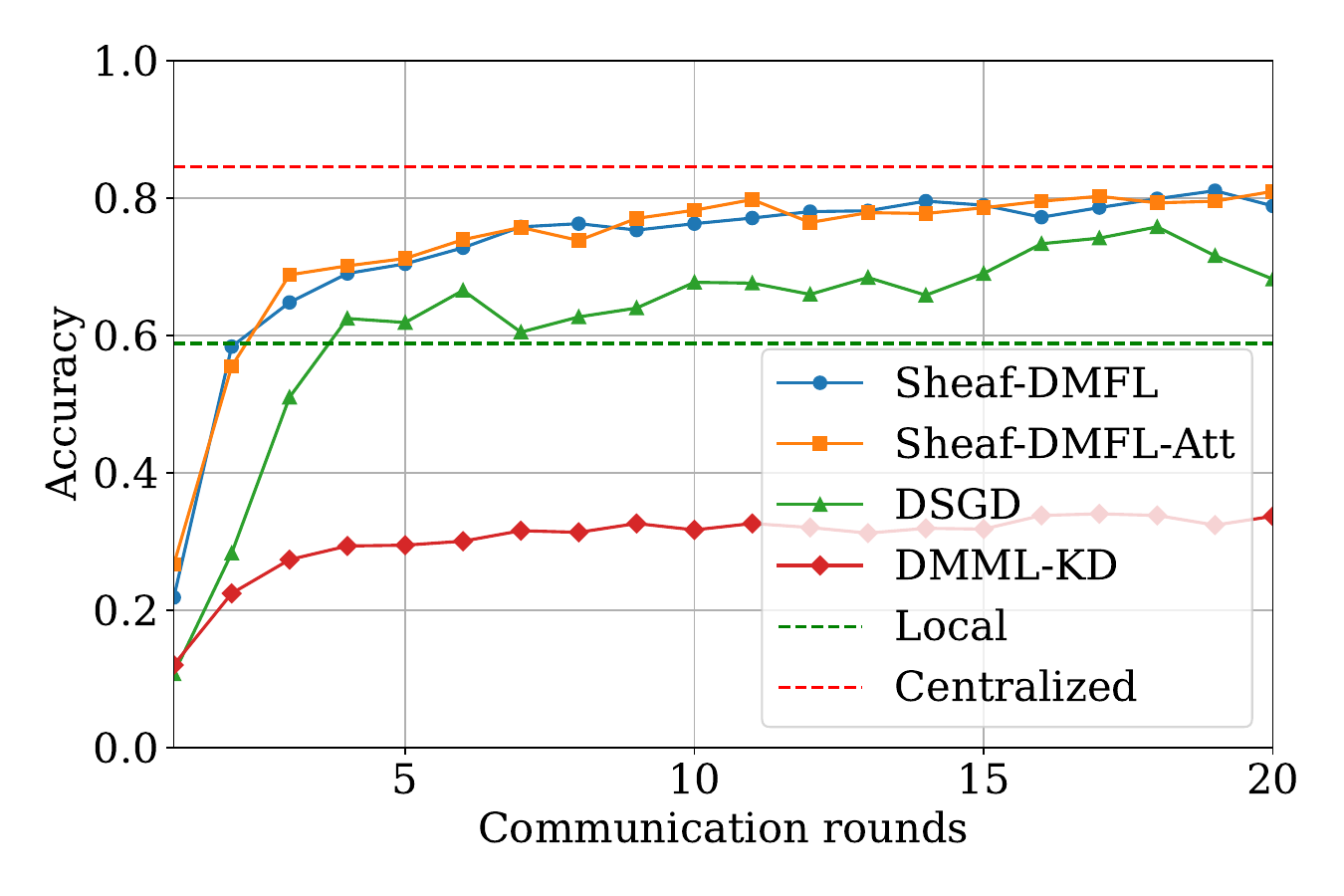}
        \caption{Camera only}
        \label{subfig_drone_group2}
    \end{subfigure}
    \begin{subfigure}{0.32\textwidth}
        \centering
        \includegraphics[width=\textwidth]{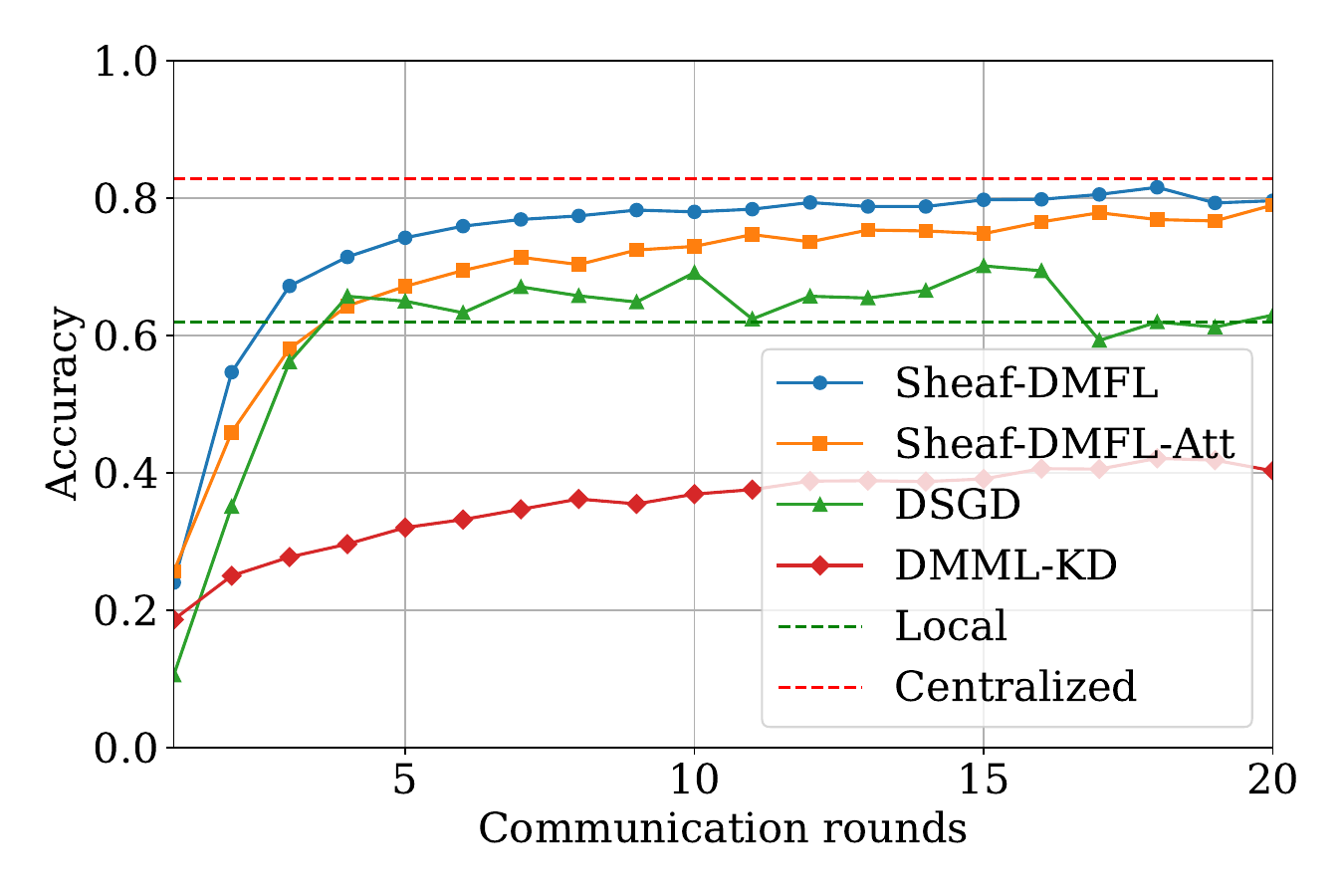}
        \caption{GPS + Camera}
        \label{subfig_drone_group3}
    \end{subfigure}

    \caption{Test accuracy as a function of the number of communication rounds for different modality combinations: (a) GPS only, (b) Camera only, and (c) GPS + Camera, using the DeepSense Drone dataset.}
    \label{fig:experiment1_comparison}
\end{figure*}

\textbf{Testing accuracy. }At the beginning of this experiment, we evaluate the performance of the proposed Sheaf-DMFL and Sheaf-DMFL-Att algorithms under an IID data split across clients, and compare them with the other baselines. 
The results are shown in Figure~\ref{fig:experiment1_comparison}, where Sheaf-DMFL achieves both faster convergence and higher accuracy than DSGD and Local, particularly for the clients with \textit{Images-only} and \textit{GPS+Images} modalities, as depicted in Figures \ref{subfig_drone_group2} and \ref{subfig_drone_group3}, respectively. Notably, Sheaf-DMFL reaches high accuracy within the initial communication rounds and maintains stable performance throughout training. This highlights its ability to effectively leverage multimodal data and coordinate learning across heterogeneous clients by modeling task relationships through the sheaf structure. 

Notably, in Figure~\ref{fig:experiment1_comparison}(a), Sheaf-DMFL performs poorly for the GPS-only group compared with Sheaf-DMFL-Att. This result highlights a key limitation of simple feature concatenation in Sheaf-DMFL, motivating the design of Sheaf-DMFL-Att. 
Specifically, in the standard Sheaf-DMFL, the task-specific head $\bm{\omega}_i$ of a GPS-only client~$i$ is regularized towards the head $\bm{\omega}_j$ of its multimodal neighbors via the sheaf term. 
If a neighbor~$j$ has a powerful image modality, its loss gradient $\nabla_{\bm{\omega}_j}$ will be predominantly shaped by visual features. 
This strong, image-driven signal then pulls $\bm{\omega}_i$ towards a configuration that is ill-suited for the GPS-only task, a phenomenon known as \textit{negative transfer}. 
On the other hand, Sheaf-DMFL-Att directly mitigates this issue by incorporating an attention mechanism \textit{before} the task-specific head. This allows each multimodal client to learn a locally optimal, weighted fusion of its features, where the subsequent gradient is more balanced and represents a more holistic understanding of the task. This ensures that the knowledge transferred through the sheaf structure is beneficial to all neighbors, including unimodal ones. Consequently, Sheaf-DMFL-Att improves the performance of the GPS-only group while maintaining its state-of-the-art accuracy in other settings, underscoring the critical role of adaptive feature fusion in heterogeneous decentralized learning.

Finally, DMML-KD underperforms compared to the other baselines due to its dependence on model aggregation, which is not well-suited for heterogeneous modality settings or when one modality is dominant. The aggregation mechanism is unable to effectively align the diverse encoder representations learned by clients with different available modalities, resulting in inconsistent feature spaces and degraded global knowledge distillation.

\begin{table}[t]
    \centering
    \caption{Test accuracy of Sheaf-DMFL compared to baselines for DeepSense Drone dataset with partial or occluded view.}
    \setlength{\tabcolsep}{3pt}
    \begin{tabular}{lccc}
        \hline
        \textbf{Algorithm} & \textbf{GPS Only} & \textbf{Images Only} & \textbf{GPS + Images} \\
        \hline
        Centralized & 0.58 & 0.62 & 0.65 \\
        Sheaf-DMFL & 0.49 & 0.61 & 0.64 \\
        Sheaf-DMFL-Att & 0.52 & 0.59 & 0.64 \\
        DMML-KD & 0.36 & 0.26 & 0.37 \\
        DSGD & 0.49 & 0.47 & 0.53 \\
        Local & 0.33 & 0.56 & 0.56 \\
        \hline
    \end{tabular}
    \label{tab:accuracy_results}
\end{table}

\textbf{Robustness. }Secondly, we assess the robustness of each method in scenarios where the image modality provides only a partial or occluded view of the scene.  In this setting, the dataset is modified by partially occluding the image input to simulate real-world scenarios, where the drone may move out of the camera’s field of view. As a result, the GPS modality becomes more critical for accurate prediction. Under these perturbed conditions, where the image modality is compromised, the performance increasingly depends on the complementary \textit{GPS} input. The results are presented in Table~\ref{tab:accuracy_results}, which provide deeper insights into the performance of each algorithm under limited-view conditions. In this scenario, Sheaf-DMFL outperforms the Local baseline and remains competitive with DSGD on the \textit{GPS-only} task, demonstrating its flexibility and robustness in single-modality learning. More importantly, Sheaf-DMFL continues to lead in the multimodal setting, effectively fusing partial visual information with GPS data, maintaining strong predictive performance. These results emphasize the value of multimodal learning, particularly in environments where one modality may be unreliable. Sheaf-DMFL excels by adaptively integrating \textit{GPS} signals, achieving higher accuracy than single-modality baselines. Furthermore, Sheaf-DMFL-Att outperforms all other baselines, particularly in the \textit{GPS-only} group, due to its ability to leverage both modalities more effectively and dynamically attend to their relative importance.

\textbf{Impact of heterogeneity. }Additionally, we investigate the impact of heterogeneity in visual perspectives, arising from variations in drone viewpoints across different clients. In such settings, collaborative training without personalization may degrade performance due to conflicting or domain-specific visual features. To illustrate this, we manually perturb the visual inputs for each client to simulate diverse observational angles and environments, as shown in Figure~\ref{fig:user_different_views}. This highlights the need for personalized models that adapt to client-specific data distributions while leveraging collaborative training.

\begin{figure}[tb]
    \centering
    \begin{subfigure}{0.32\columnwidth}
        \centering
        \includegraphics[width=\linewidth]{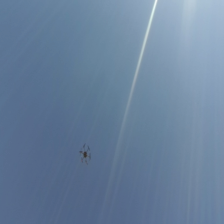}
    \end{subfigure}
    \begin{subfigure}{0.32\columnwidth}
        \centering
        \includegraphics[width=\linewidth]{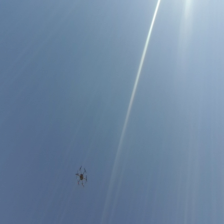}
    \end{subfigure}
    \begin{subfigure}{0.32\columnwidth}
        \centering
        \includegraphics[width=\linewidth]{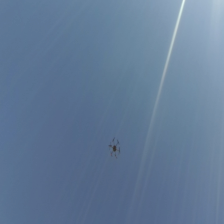}
    \end{subfigure}
    
    \caption{Camera views of clients 2 (a), 3 (b), and 6 (c) for the same drone location.}
    \label{fig:user_different_views}
\end{figure}

\begin{figure}[tb]
        \centering
        \includegraphics[width=0.5\textwidth]{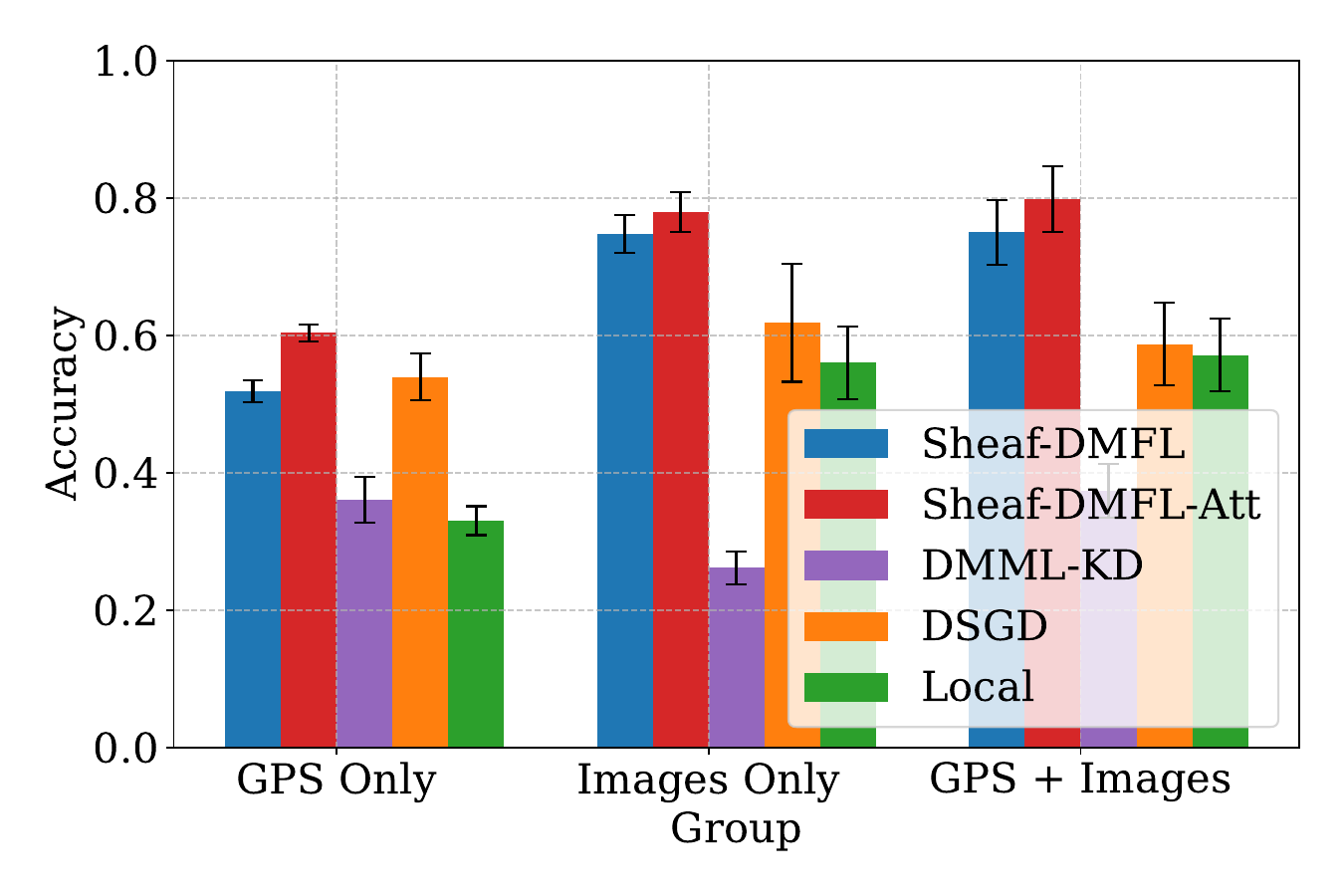}
    \caption{Comparison of test accuracy for different modality combinations using different views.}
    \label{fig:accuracy_different_view}
\end{figure}

Figure~\ref{fig:accuracy_different_view} highlights the effectiveness of our proposed approach in the challenging setting of different views (heterogeneity), where each drone observes distinct visual scenes due to variations in camera perspective or coverage. This scenario reflects realistic conditions in which drones operate in different environments, leading to heterogeneous visual inputs across clients. In this setting, Sheaf-DMFL-Att consistently outperforms decentralized baselines by effectively learning personalized models while leveraging shared information. Unlike DSGD, whose performance degrades due to global averaging across inconsistent visual features, Sheaf-DMFL and Sheaf-DMFL-Att preserve client-specific representations, enabling more accurate predictions despite visual misalignment. Notably, Sheaf-DMFL-Att achieves the highest performance across all modality groups in the different-view setting. 

These results underscore the strength of our approach in federated multimodal learning under realistic, heterogeneous conditions, where clients have different views and modalities. In contrast, baseline methods based solely on averaging, such as DSGD and DMML-KD, struggle to maintain performance due to their inability to personalize effectively or dynamically integrate modalities.

\begin{table}[t]
    \centering
    \caption{Test accuracy for different groups using different $\gamma$ and initialization of restriction maps $\bm{P}_{i,j}.$}
    \label{tab:ablation_accuracy}
    \begin{tabular}{lcc}
        \toprule
        \textbf{Group} & $\bm{P}_{i, j}^{0}$ & \textbf{Accuracy} \\
        \midrule
        \multirow{4}{*}{GPS only} 
        & $\mathbf{P}_{i,j}^{\mathcal{I}}$ ($\gamma$=0.1)  & $0.5833 \pm 0.0471$ \\
        & $\mathbf{P}_{i,j}^{\mathcal{I}}$ ($\gamma$=0.23) & $0.6052 \pm 0.0345$ \\
        & $\mathbf{P}_{i,j}^{\mathcal{R}}$ ($\gamma$=0.1) & $0.5644 \pm 0.0477$ \\
        & $\mathbf{P}_{i,j}^{\mathcal{R}}$ ($\gamma$=0.23) & $0.5694 \pm 0.0451$ \\
        \midrule
        \multirow{4}{*}{Camera only} 
        & $\mathbf{P}_{i,j}^{\mathcal{I}}$ ($\gamma$=0.1)  & $0.7917 \pm 0.0282$ \\
        & $\mathbf{P}_{i,j}^{\mathcal{I}}$ ($\gamma$=0.23) & $0.8179 \pm 0.0310$ \\
        & $\mathbf{P}_{i,j}^{\mathcal{R}}$ ($\gamma$=0.1) & $0.8083 \pm 0.0471$ \\
        & $\mathbf{P}_{i,j}^{\mathcal{R}}$ ($\gamma$=0.23) & $0.8154 \pm 0.0305$ \\
        \midrule
        \multirow{4}{*}{Camera + GPS} 
        & $\mathbf{P}_{i,j}^{\mathcal{I}}$ ($\gamma$=0.1)  & $0.8102 \pm 0.0418$ \\
        & $\mathbf{P}_{i,j}^{\mathcal{I}}$ ($\gamma$=0.23) & $0.8201 \pm 0.0474$ \\
        & $\mathbf{P}_{i,j}^{\mathcal{R}}$ ($\gamma$=0.1) & $0.7910 \pm 0.0384$ \\
        & $\mathbf{P}_{i,j}^{\mathcal{R}}$ ($\gamma$=0.23) & $0.8115 \pm 0.0358$ \\
        \bottomrule
    \end{tabular}
\end{table}
\textbf{Ablation study.} Finally, we conduct an ablation study for Sheaf-DMFL-Att to gauge the effect of the factor $\gamma$ and the projection matrix initialization $\bm{P}_{i, j}^0$ on the performance. The results, shown in Table~\ref{tab:ablation_accuracy}, report test accuracy for $\gamma \in \{0.1, 0.23\}$ and $\bm{P}_{i, j}^0 \in \{\bm{P}_{i, j}^\mathcal{I}, \bm{P}_{i, j}^\mathcal{R}\}$. In general, a larger $\gamma$ results in a more expressive projection of the weights, leading to improved accuracy across all modality groups. This trend is evident, as accuracy values are consistently higher for $\gamma = 0.23$ compared to $\gamma = 0.1$. However, the choice of initialization also plays a crucial role. The $\mathbf{P}_{i,j}^{\mathcal{I}}$ initialization, based on structured identity-guided projection, consistently yields better or comparable performance relative to the random initialization $\mathbf{P}_{i,j}^{\mathcal{R}}$, suggesting that structured weight projection facilitates more stable and effective cross-client adaptation. However, the projection matrix $\bm{P}_{i,j}$ has a dimension of $\lfloor \gamma \cdot \frac{d_i + d_j}{2} \rfloor \times d_i$, which grows substantially with larger $\gamma$. This increased dimensionality offers higher expressiveness but comes at the cost of greater memory and storage requirements, highlighting a trade-off between performance and efficiency in deployment.

\section{Conclusion}
\label{sec_conclusion}

This work proposed a novel decentralized multimodal learning framework that leverages the sheaf structure and local attention mechanism to enable collaborative learning among clients with heterogeneous data modalities. By formulating the decentralized multimodal learning problem as multi-task learning, the proposed algorithms capture and exploit task relationships across clients through learnable sheaf restriction maps. This formalism allows each client to benefit from neighboring devices with similar tasks while preserving privacy. We provide a rigorous theoretical analysis for Sheaf-DMFL-Att under standard non-convex settings, establishing its convergence to a stationary point. The effectiveness of the proposed algorithm is validated through experiments on practical scenarios, including blockage prediction and mmWave beamforming. 
In both tasks, the proposed algorithms outperform the decentralized baselines by converging faster and achieving higher accuracy, where Sheaf-DMFL-Att shows superiority over Sheaf-DMFL for the more challenging mmWave beamforming scenario. Future work will investigate more efficient methods for modeling the sheaf structure to improve scalability by minimizing the storage of large restriction maps per edge by means of compression and pruning.

\appendix 

\subsection{Proofs of Lemmas}
\label{appendix_proof_lemmas}

\subsubsection{Proof of Lemma 1}
The average parameter vector for modality $k$ at iteration $r$ is $\bar{\bm{\phi}}_k^r = \frac{1}{|\mathcal{V}_k|} \bm{\Phi}_k^r \mathbf{1}_{|\mathcal{V}_k|}$.
Given that assumptions~\ref{ass_W_double_stochasticity} and \ref{ass_subgraph_connectivity} hold, we analyze the difference between consecutive average parameters for modality $k$:
\begin{align}
\label{eq_avg_phi_update_lemma}
\bar{\bm{\phi}}_k^{r+1} = \frac{1}{|\mathcal{V}_k|} \bm{\Phi}_k^{r+1} \mathbf{1}_{|\mathcal{V}_k|} 
= \bar{\bm{\phi}}_k^r - \eta_\phi \bar{\bm{g}}_k^r,
\end{align}
where $\bar{\bm{g}}_k^r = \frac{1}{|\mathcal{V}_k|} \sum_{i \in \mathcal{V}_k} \nabla_{\bm{\phi}_{i,k}} f_i(\bm{\theta}_i^r)$ is the average gradient for modality $k$, at iteration $r$, which proves Lemma \ref{lemma1}.
\subsubsection{Proof of Lemma 2}

Following Assumption~\ref{ass_smoothness}, we have for each client $i$
\begin{align}
\label{eq_smoothnes_expansion_revised}
& f_i\left( \tilde{\bm{\theta}}_i^{r+1} \right) - f_i\left( \tilde{\bm{\theta}}_i^{r} \right) 
\leq  \nonumber \\
& \nabla f_i\left( \tilde{\bm{\theta}}_i^{r} \right)^\top \left( \tilde{\bm{\theta}}_i^{r+1} - \tilde{\bm{\theta}}_i^{r} \right) + \frac{L}{2} \left\| \tilde{\bm{\theta}}_i^{r+1} - \tilde{\bm{\theta}}_i^{r} \right\|^2 \nonumber \\
&\stackrel{(a)}{=} \sum_{k \in \mathcal{M}_i} \left(\nabla_{\bar{\bm{\phi}}_k} f_i(\bar{\bm{\phi}}_k^r)\right)^\top (\bar{\bm{\phi}}_k^{r+1} - \bar{\bm{\phi}}_k^r) + \frac{L}{2} \sum_{k \in \mathcal{M}_i} \|\bar{\bm{\phi}}_k^{r+1} - \bar{\bm{\phi}}_k^r\|^2 \nonumber \\
&\quad + \sum_{k \in \mathcal{M}_i} \left(\nabla_{\bm{\beta}_{i,k}} f_i(\bm{\beta}_{i,k}^r)\right)^\top (\bm{\beta}_{i,k}^{r+1} - \bm{\beta}_{i,k}^r) \nonumber \\ 
&\quad + \frac{L}{2} \sum_{k \in \mathcal{M}_i} \|\bm{\beta}_{i,k}^{r+1} - \bm{\beta}_{i,k}^r\|^2 + \left(\nabla_{\bm{\omega}_i} f_i(\bm{\omega}_i^r)\right)^\top (\bm{\omega}_i^{r+1} - \bm{\omega}_i^r) \nonumber \\
&\quad + \frac{L}{2} \|\bm{\omega}_i^{r+1} - \bm{\omega}_i^r\|^2.
\end{align}
where $(a)$ holds from the definition of $\tilde{\bm{\theta}}_i^r$. 
By expanding the smoothness inequality in \eqref{eq_smoothnes_expansion_revised}, and utilizing the updates of $\bm{\beta}_{i,k}$ and $\bar{\bm{\phi}}_k$ in Lemma \ref{lemma1}, we can derive a descent bound on the overall objective as
\begin{align} 
\label{eq_attention_smoothness_revised_4} 
f(\tilde{\bm{\theta}}^{r+1})  &\leq f(\tilde{\bm{\theta}}^r) -\eta_\beta \left(1 - \frac{L \eta_\beta}{2}\right) \left\| \nabla_{\bm{\beta}} f(\tilde{\bm{\theta}}^r) \right\|^2 \\ 
&\quad - \sum_{k=1}^{M} 
    \eta_\phi |\mathcal{V}_k|^2 \left(1 - \frac{L \eta_\phi}{2|\mathcal{V}_k|} \right) 
    \left\| \nabla_{\bm{\phi}_k} f(\tilde{\bm{\theta}}^r) \right\|^2 \nonumber \\ 
&\quad +  \left[ \nabla_{\bm{\omega}} f(\tilde{\bm{\theta}}^r)^\top (\bm{\omega}^{r+1} - \bm{\omega}^r) + \frac{L}{2} \left\| \bm{\omega}^{r+1} - \bm{\omega}^r \right\|^2 \right],\nonumber 
\end{align}
where $\bm{\beta} = \left[ \bm{\beta}_{i,k}^\top \right]_{i \in \mathcal{V},\, k \in \mathcal{M}_i}^\top$,
$\bm{\omega} = \left[ \bm{\omega}_i^\top \right]_{i \in \mathcal{N}}^\top$, 
where we define the concatenated modified vector $\tilde{\bm{\theta}} = [\tilde{\bm{\theta}}_1^\top, \ldots, \tilde{\bm{\theta}}_N^\top]^\top$ and $f(\tilde{\bm{\theta}}) := \sum_{i=1}^N f_i(\tilde{\bm{\theta}}_i)$, which concludes the proof for Lemma \ref{lemma2}. The detailed derivation is omitted due to space limitations.
\subsection{Proof of Theorem 1}
\label{appendix_proof_theorem1}
We write the vectorized form of the update equations in \eqref{eq:task_specific_update} and \eqref{eq:projection_update} as follows
\begin{align}
\bm{\omega}^{r+1} &= \bm{\omega}^{r} - \alpha \left( \nabla f(\bm{\omega}^{r}) + \lambda (\bm{P}^{r})^\top \bm{P}^{r} \bm{\omega}^{r} \right), \\
\bm{P}^{r+1} &= \bm{H} \odot \left( \bm{P}^{r} - \eta \lambda \bm{P}^{r} \bm{\omega}^{r+1} (\bm{\omega}^{r+1})^\top \right),
\end{align}
where $\odot$ denotes the element-wise (Hadamard) product, and $\bm{H}$ is a block matrix with blocks $\bm{H}_{ij} = \mathbf{1}_{d_{ij} \times d_i}$ whose entries are all ones. The matrix $\bm{H}$ is then defined block-wise as
\begin{align}
    \bm{H}_{ij} = 
    \begin{cases}
    \mathbf{1}_{d_{ij} \times d_i}, & \text{if } (i,j) \in \mathcal{E}, \\
    \bm{0}_{d_{ij} \times d_i}, & \text{otherwise},
    \end{cases}
\end{align}

where $\mathbf{1}_{d_{ij} \times d_i}$ denotes the all-ones matrix of size $d_{ij} \times d_i$, and $\bm{0}_{d_{ij} \times d_i}$ is the zero matrix of the same size.

Adding $\frac{\lambda}{2} (\bm{\omega}^{r+1})^\top (\bm{P}^r)^\top \bm{P}^r \bm{\omega}^{r+1}$ on both sides of \eqref{eq_attention_smoothness_revised_4}, and using the definition of $\Psi(\tilde{\bm{\theta}}^{r+1}, \bm{P}^r)$ in \eqref{eq_global_obj_psi_analysis}, we have
\begin{align}
    \label{eq_attention_smoothness_revised_5}
     &\Psi(\tilde{\bm{\theta}}^{r+1}, \bm{P}^r) \leq \nonumber \\
     & f(\tilde{\bm{\theta}}^r) - \eta_\beta \left(1 - \frac{L \eta_\beta}{2} \right) \left\| \nabla_{\bm{\beta}} \Psi(\tilde{\bm{\theta}}^{r}, \bm{P}^r) \right\|^2 \nonumber \\
     & - \sum_{k=1}^{M} 
    \eta_\phi |\mathcal{V}_k|^2 \left(1 - \frac{L \eta_\phi}{2|\mathcal{V}_k|} \right) 
    \left\| \nabla_{\bar{\bm{\phi}}_k} \Psi(\tilde{\bm{\theta}}^{r}, \bm{P}^r) \right\|^2 \nonumber \\
    & + \nabla_{\bm{\omega}} f(\tilde{\bm{\theta}}^r)^\top (\bm{\omega}^{r+1} - \bm{\omega}^r) + \frac{L}{2} \left\| \bm{\omega}^{r+1} - \bm{\omega}^r \right\|^2 \nonumber \\
    & + \frac{\lambda}{2} (\bm{\omega}^{r+1})^\top (\bm{P}^r)^\top \bm{P}^r \bm{\omega}^{r+1} \nonumber \\
    &= \Psi(\tilde{\bm{\theta}}^r, \bm{P}^r) + \left\langle \nabla_{\bm{\omega}} \Psi(\tilde{\bm{\theta}}^r, \bm{P}^r), \bm{\omega}^{r+1} - \bm{\omega}^{r} \right\rangle \nonumber \\
    &\quad + \frac{L}{2} \left\| \bm{\omega}^{r+1} - \bm{\omega}^r \right\|^2 - \eta_\beta \left(1 - \frac{L \eta_\beta}{2} \right) \left\| \nabla_{\bm{\beta}} \Psi(\tilde{\bm{\theta}}^{r}, \bm{P}^r) \right\|^2 \nonumber \\
    &\quad - \sum_{k=1}^{M} 
    \eta_\phi |\mathcal{V}_k|^2 \left(1 - \frac{L \eta_\phi}{2|\mathcal{V}_k|} \right) 
    \left\| \nabla_{\bar{\bm{\phi}}_k} \Psi(\tilde{\bm{\theta}}^{r}, \bm{P}^r) \right\|^2,
\end{align}
where the last equality holds from the definition of the global objective function $\Psi(\tilde{\bm{\theta}}^{r+1}, \bm{P}^r)$ and the definition of the dot product $<\bm{a}, \bm{b}> = \bm{a}^\top \bm{b}$.

Now, using the update rule for $\bm{\omega}^{r+1}$, we have
\begin{align}
\label{eq_expand_omega_1}
    & \left\langle \nabla_{\bm{\omega}} \Psi(\tilde{\bm{\theta}}^r, \bm{P}^r), \bm{\omega}^{r+1} - \bm{\omega}^r \right\rangle \nonumber \\
    &= \left\langle \nabla_{\bm{\omega}} \Psi(\tilde{\bm{\theta}}^r, \bm{P}^r), -\alpha \nabla_{\bm{\omega}} \Psi(\tilde{\bm{\theta}}^r, \bm{P}^r) \right\rangle \nonumber \\
    &= -\alpha \left\| \nabla_{\bm{\omega}} \Psi(\tilde{\bm{\theta}}^r, \bm{P}^r) \right\|^2.
\end{align}
Additionally, we have
\begin{align}
\label{eq_expand_omega_2}
    \left\| \bm{\omega}^{r+1} - \bm{\omega}^r \right\|^2
    &= \alpha^2 \left\| \nabla_{\bm{\omega}} \Psi(\tilde{\bm{\theta}}^r, \bm{P}^r) \right\|^2.
\end{align}

Substitute \eqref{eq_expand_omega_1} and \eqref{eq_expand_omega_2} into \eqref{eq_attention_smoothness_revised_5}, we get
\begin{align}
\label{eq_attention_smoothness_revised_6}
    & \Psi(\tilde{\bm{\theta}}^{r+1}, \bm{P}^r) 
    \leq
    \Psi(\tilde{\bm{\theta}}^r, \bm{P}^r) 
    - \left( \alpha - \frac{L \alpha^2}{2} \right) \left\| \nabla_{\bm{\omega}} \Psi(\tilde{\bm{\theta}}^r, \bm{P}^r) \right\|^2 \nonumber \\
    &\quad - \eta_\beta \left(1 - \frac{L \eta_\beta}{2} \right) \left\| \nabla_{\bm{\beta}} \Psi(\tilde{\bm{\theta}}^r, \bm{P}^r) \right\|^2 \nonumber \\
    &\quad - \sum_{k=1}^{M} 
    \eta_\phi |\mathcal{V}_k|^2 \left(1 - \frac{L \eta_\phi}{2|\mathcal{V}_k|} \right) 
    \left\| \nabla_{\bar{\bm{\phi}}_k} \Psi(\tilde{\bm{\theta}}^{r}, \bm{P}^r) \right\|^2.
\end{align}
To ensure that all the norm terms have negative coefficients, the step sizes must satisfy $\alpha < \frac{2}{L}$, $\eta_\beta < \frac{2}{L}$, and $\eta_\phi < \frac{2 |\mathcal{V}_k|}{L}, \forall k \in \mathcal{M}$.

From the definition of $\Psi(\bm{\theta}, \bm{P})$, we have
\begin{align}
\label{eq_global_Psi_P}
    \Psi(\tilde{\bm{\theta}}^{r+1}, \bm{P}^{r+1}) 
    &= f(\tilde{\bm{\theta}}^{r+1}) + \frac{\lambda}{2} (\bm{\omega}^{r+1})^\top (\bm{P}^{r+1})^\top \bm{P}^{r+1} \bm{\omega}^{r+1}.
\end{align}

Given the update rule for $\bm{P}^{r+1}$, we obtain
\begin{align}
\label{eq_P_update_inequality}
    &(\bm{P}^{r+1})^\top \bm{P}^{r+1}
    \preceq  \\
    & \left( \bm{P}^r - \eta \lambda \bm{P}^r \bm{\omega}^{r+1} (\bm{\omega}^{r+1})^\top \right)^\top 
    \left( \bm{P}^r - \eta \lambda \bm{P}^r \bm{\omega}^{r+1} (\bm{\omega}^{r+1})^\top \right). \nonumber
\end{align}

Expanding the right-hand side of \eqref{eq_P_update_inequality} we get
\begin{align}
\label{eq_P_update_expand_inequality}
    &(\bm{P}^r - \eta \lambda \bm{P}^r \bm{\omega}^{r+1} (\bm{\omega}^{r+1})^\top )^\top 
    (\bm{P}^r - \eta \lambda \bm{P}^r \bm{\omega}^{r+1} (\bm{\omega}^{r+1})^\top ) \nonumber \\
    & = (\bm{P}^r)^\top \bm{P}^r - 2\eta \lambda (\bm{P}^r)^\top \bm{P}^r \bm{\omega}^{r+1} (\bm{\omega}^{r+1})^\top \nonumber \\
    &\quad + \eta^2 \lambda^2 (\bm{P}^r)^\top \bm{P}^r \bm{\omega}^{r+1} (\bm{\omega}^{r+1})^\top \bm{\omega}^{r+1} (\bm{\omega}^{r+1})^\top.
\end{align}

Substituting \eqref{eq_P_update_expand_inequality} into \eqref{eq_global_Psi_P}, we get
\begin{align}
    \label{eq_restriction_bound}
    &\Psi(\tilde{\bm{\theta}}^{r+1}, \bm{P}^{r+1}) \leq \nonumber \\
    &f(\tilde{\bm{\theta}}^{r+1}) + \frac{\lambda}{2} (\bm{\omega}^{r+1})^\top (\bm{P}^r)^\top \bm{P}^r \bm{\omega}^{r+1} \nonumber \\
    &- \eta \lambda^2 (\bm{\omega}^{r+1})^\top (\bm{P}^r)^\top \bm{P}^r \bm{\omega}^{r+1} (\bm{\omega}^{r+1})^\top \bm{\omega}^{r+1} \nonumber \\
    &+ \frac{\eta^2 \lambda^3}{2} (\bm{\omega}^{r+1})^\top (\bm{P}^r)^\top \bm{P}^r \bm{\omega}^{r+1} (\bm{\omega}^{r+1})^\top \bm{\omega}^{r+1} (\bm{\omega}^{r+1})^\top \bm{\omega}^{r+1}.
\end{align}

Furthermore, using the gradient of $\Psi$ with respect to $\bm{P}$, we can compute the squared Frobenius norm
\begin{align}
    \label{eq_restriction_gradient}
    & \left\| \nabla_{\bm{P}} \Psi(\tilde{\bm{\theta}}^{r+1}, \bm{P}^r) \right\|_F^2 \nonumber \\
    & = \lambda^2 \operatorname{Tr} \left( (\bm{\omega}^{r+1}) (\bm{\omega}^{r+1})^\top (\bm{P}^r)^\top \bm{P}^r \bm{\omega}^{r+1} (\bm{\omega}^{r+1})^\top \right) \nonumber \\
    &= \lambda^2 (\bm{\omega}^{r+1})^\top (\bm{P}^r)^\top \bm{P}^r \bm{\omega}^{r+1} \left\| \bm{\omega}^{r+1} \right\|^2.
\end{align}

Using the gradient expression in \eqref{eq_restriction_gradient}, we can rewrite \eqref{eq_restriction_bound} as
\begin{align}
\label{eq_restriction_bound_2}
    & \Psi(\tilde{\bm{\theta}}^{r+1}, \bm{P}^{r+1}) \nonumber \\
    &  \leq\Psi(\tilde{\bm{\theta}}^{r+1}, \bm{P}^{r})  - \eta \left\| \nabla_{\bm{P}} \Psi(\tilde{\bm{\theta}}^{r+1}, \bm{P}^r) \right\|_F^2 \nonumber \\
    &\quad + \frac{\eta^2\lambda}{2} \left\| \nabla_{\bm{P}} \Psi(\tilde{\bm{\theta}}^{r+1}, \bm{P}^r) \right\|_F^2 \left\| \bm{\omega}^{r+1} \right\|^2 \nonumber \\
    & = \Psi(\tilde{\bm{\theta}}^{r+1}, \bm{P}^{r}) 
    - \eta \left(1 - \frac{\eta \lambda}{2} \left\| \bm{\omega}^{r+1} \right\|^2 \right) 
    \left\| \nabla_{\bm{P}} \Psi(\tilde{\bm{\theta}}^{r+1}, \bm{P}^{r}) \right\|_F^2 \nonumber \\
    & \leq \Psi(\tilde{\bm{\theta}}^{r+1}, \bm{P}^{r}) 
    - \eta \left(1 - \frac{\eta \lambda D_\omega^2}{2} \right) 
    \left\| \nabla_{\bm{P}} \Psi(\tilde{\bm{\theta}}^{r+1}, \bm{P}^{r}) \right\|_F^2 ,
\end{align}
where the last inequality follows Assumption~\ref{ass_bounded_domain} for $\|\bm{\omega}\|$. Therefore, to ensure that the term $\left(1 - \frac{\eta \lambda D_\omega^2}{2} \right)$ is positive, we choose $\eta$ such that $\eta < \frac{2}{\lambda D_\omega^2}$. By adding both \eqref{eq_attention_smoothness_revised_6} and \eqref{eq_restriction_bound_2}, we have
\begin{align}
\label{eq_combined_inequalty}
&\Psi(\tilde{\bm{\theta}}^{r+1}, \bm{P}^{r+1}) \leq \nonumber \\ 
&\Psi(\tilde{\bm{\theta}}^r, \bm{P}^r) 
- \left( \alpha - \frac{L \alpha^2}{2} \right) \left\| \nabla_{\bm{\omega}} \Psi(\tilde{\bm{\theta}}^r, \bm{P}^r) \right\|^2 \nonumber \\
& - \eta_\beta \left(1 - \frac{L \eta_\beta}{2} \right) \left\| \nabla_{\bm{\beta}} \Psi(\tilde{\bm{\theta}}^r, \bm{P}^r) \right\|^2 \nonumber \\
& - \sum_{k=1}^{M} 
    \eta_\phi |\mathcal{V}_k|^2 \left(1 - \frac{L \eta_\phi}{2|\mathcal{V}_k|} \right) 
    \left\| \nabla_{\bar{\bm{\phi}}_k} \Psi(\tilde{\bm{\theta}}^{r}, \bm{P}^r) \right\|^2 \nonumber \\
& - \eta \left(1 - \frac{\eta \lambda N D_\omega^2}{2} \right) 
\left\| \nabla_{\bm{P}} \Psi(\tilde{\bm{\theta}}^{r+1}, \bm{P}^r) \right\|_F^2.
\end{align}

Taking the sum of \eqref{eq_combined_inequalty} from $r = 0$ to $R - 1$, we obtain the following telescoping result
\begin{align}
\label{eq_sum_combined_inequalty}
& \Psi(\tilde{\bm{\theta}}^{R}, \bm{P}^{R}) \leq \nonumber \\
&\Psi(\tilde{\bm{\theta}}^{0}, \bm{P}^{0}) 
-  \alpha \left(  1 - \frac{L \alpha}{2} \right) \sum_{r=0}^{R-1} \left\| \nabla_{\bm{\omega}} \Psi(\tilde{\bm{\theta}}^{r}, \bm{P}^{r}) \right\|^2 \nonumber \\
& -  \eta_\beta \left(1 - \frac{L \eta_\beta}{2} \right) \sum_{r=0}^{R-1}\left\| \nabla_{\bm{\beta}} \Psi(\tilde{\bm{\theta}}^{r}, \bm{P}^{r}) \right\|^2 \nonumber \\
& -  \sum_{k=1}^{M} 
    \eta_\phi |\mathcal{V}_k|^2 \left(1 - \frac{L \eta_\phi}{2|\mathcal{V}_k|} \right) 
    \sum_{r=0}^{R-1} \left\| \nabla_{\bar{\bm{\phi}}_k} \Psi(\tilde{\bm{\theta}}^{r}, \bm{P}^r) \right\|^2 \nonumber \\
& -  \eta \left(1 - \frac{\eta \lambda N D_\omega^2}{2} \right) 
\sum_{r=0}^{R-1} \left\| \nabla_{\bm{P}} \Psi(\tilde{\bm{\theta}}^{r+1}, \bm{P}^{r}) \right\|_F^2.
\end{align}

Rearranging the terms and defining $\rho = \min \left\{\alpha( 1 - \frac{L \alpha}{2}), \eta_\beta \left(1 - \frac{L \eta_\beta}{2} \right), \rho_\phi, \eta \left(1 - \frac{\eta \lambda D_\omega^2}{2} \right) \right\}$, where $\rho_\phi := \min_{k} \left\{ \eta_\phi \, |\mathcal{V}_k|^2 \left(1 - \frac{L \eta_\phi}{2 |\mathcal{V}_k|} \right) \right\}$, we have
\begin{align}
& \frac{1}{R} \sum_{r=0}^{R-1} \left\| \nabla \Psi(\tilde{\bm{\theta}}^r, \bm{P}^r) \right\|^2 
 \leq \nonumber \\
 & \frac{1}{\rho R} \left( \Psi(\tilde{\bm{\theta}}^0, \bm{P}^0) - \Psi(\tilde{\bm{\theta}}^R, \bm{P}^R) \right) \leq \frac{1}{\rho R} \left( \Psi(\tilde{\bm{\theta}}^0, \bm{P}^0) - \Psi^\star \right),
\end{align}
where $\Psi^\star$ is the lower bound of the function $\Psi$ and $\left\| \nabla \Psi(\tilde{\bm{\theta}}^k, \bm{P}^k) \right\|^2$ is defined as 
\begin{align}
    \left\| \nabla \Psi(\tilde{\bm{\theta}}^k, \bm{P}^k) \right\|^2
&= \left\| \nabla_{\bm{\beta}} \Psi(\tilde{\bm{\theta}}^k, \bm{P}^k) \right\|^2
+ \sum_{k=1}^{M}\left\| \nabla_{\bar{\bm{\phi}}_k} \Psi(\tilde{\bm{\theta}}^{r}, \bm{P}^r) \right\|^2 \nonumber \\
&\quad + \left\| \nabla_{\bm{\omega}} \Psi(\tilde{\bm{\theta}}^k, \bm{P}^k) \right\|^2 + \left\| \nabla_{\bm{P}} \Psi(\tilde{\bm{\theta}}^k, \bm{P}^k) \right\|_F^2,\nonumber
\end{align}
which concludes the proof of Theorem \ref{theorem1}.

\small

\end{document}